\newtheorem{proposition}{Proposition}
\title{SPaCe: Unlocking Sample-Efficient Large Language Models Training With Self-Pace Curriculum Learning}
\author{
 \textbf{Dai Do\textsuperscript{1}},
 \textbf{Manh Nguyen\textsuperscript{1}},
 \textbf{Svetha Venkatesh\textsuperscript{1}},
 \textbf{Hung Le\textsuperscript{1}}
\\
 \textsuperscript{1}Deakin Applied AI Initiative, Deakin University, Australia \quad
\\
{
\texttt{v.do@deakin.edu.au}
}
}
\begin{document}
\maketitle
\begin{abstract}
Large language models (LLMs) have shown strong reasoning capabilities when fine-tuned with reinforcement learning (RL). However, such methods require extensive data and compute, making them impractical under many realistic training budgets. Many existing pipelines sample training examples uniformly across steps or epochs, ignoring differences in difficulty, redundancy, and learning value, which slows learning and wastes computation. We propose \textbf{SPaCe}, a self-paced learning framework that enables efficient learning based on the capability of the model being trained through optimizing which data to use and when. First, we apply \emph{cluster-based data reduction} to partition training data by semantics and difficulty, extracting a compact yet diverse subset that reduces redundancy. Then, a \textit{multi-armed bandit} treats data clusters as arms, allocating training samples based on the model's solve rates and learning progress. Experiments across multiple reasoning benchmarks show that SPaCe achieves comparable or better accuracy than state-of-the-art baselines while using up to \(100\times\) fewer samples. Ablation studies and analyses further highlight the importance of both data clustering and adaptive selection. Our results demonstrate that carefully curated, performance-driven training curricula can unlock strong reasoning abilities in LLMs with minimal resources.
\end{abstract}

\section{Introduction}
Large Language Models (LLMs) have achieved remarkable progress in tasks requiring reasoning, problem-solving, and generalization, driven largely by scaling trends in model size, data, and compute \citep{google2024gemini, openai2024a}. As the cost and complexity of pretraining continue to rise, research attention has increasingly shifted toward post-training techniques, which aim to improve LLM capabilities more efficiently. Among these, Reinforcement Fine-Tuning (RFT) has emerged as a promising method that aligns model behavior with outcome-based reward signals, often relying on lightweight supervision without elaborate reward engineering or inference-time computation \citep{kumar2025llm, deepseekai2025deepseekr1incentivizingreasoningcapability, lightman2023lets}. 

Standard RFT uniformly samples batches from the full dataset \citep{deepseekai2025deepseekr1incentivizingreasoningcapability}. While simple, this approach ignores each example’s difficulty, informativeness, and uncertainty, wasting limited reward feedback on trivial or noisy instances and slowing convergence \citep{ouyang2022training, dong2023raft}. This raises two key underexplored dimensions: how to select which examples to train on, and how to present them to LLMs over time.

Data reduction methods prioritize informativeness by estimating example difficulty or uncertainty. For example, variance-based filtering based on multiple forward passes through a reference model \citep{wang2025reinforcementlearningreasoninglarge}. Although effective at denoising, these approaches incur significant computational overhead, impractical for resource-constrained models. They are also sensitive to the selected training examples and the uncertainty estimator, which hinders generalization to new LLMs.

In parallel, curriculum design plays a central role in guiding the learning trajectory \citep{10.1145/1553374.1553380}. As the model improves during fine-tuning, the useful difficulty level shifts dynamically, yet static curricula or random orders often fail to reflect this progression. Recent attempts at adaptivity filter examples with heuristic thresholds \citep{AdaRFT}, but such mechanisms are fragile, especially for small or weak models that rely on imperfect difficulty metrics, prematurely excluding challenging, informative examples, and stalling progress.

We introduce \textbf{SPaCe}, a self-paced RFT framework that improves training efficiency by selecting informative examples and adapting the training schedule online. We cast RFT as a Multi-Armed Bandit (MAB) problem \citep{sutton1998reinforcement}, where each arm is a cluster of examples with similar semantics and per-example attribute, enabling data selection beyond fixed heuristics. SPaCe first performs a one-time clustering by jointly clustering latent representations and per-example attributes, then reduces redundancy by retaining a fixed number of diverse representatives per cluster, selected by iteratively maximizing embedding distance \citep{wang2025reinforcementlearningreasoninglarge}. During training, SPaCe pulls an arm, sampling data from its cluster. The solve rate updates the bandit, while its negative (difficulty) guides Thompson Sampling, with additional downweighting for clusters whose hardness plateaus to encourage exploration \citep{thompson1933likelihood, russo2020tutorialthompsonsampling}. Overall, SPaCe prioritizes examples that are challenging yet learnable, avoiding repeated training on already-solved problems and remaining effective under tight budgets in low-resource RFT \citep{le2025reasoning}.

To evaluate our approach, we conduct extensive experiments on mathematical and logical reasoning tasks using various LLMs of different sizes. Results show that SPaCe significantly improves reasoning accuracy and robustness compared to both reinforcement learning and curriculum learning baselines. Notably, SPaCe also outperforms methods that rely on exhaustive search to select a single or a few training examples. Our analysis reveals how poorly designed curricula can get stuck in easy example regions, failing to leverage the diversity of the dataset. 

In summary, our contributions are threefold: (1) We propose \textbf{SPaCe}, a novel two-stage framework that reduces the number of training examples and optimizes the RFT progress using MAB. (2) Our method is lightweight, significantly reducing the number of training examples while adding minimal computational overhead, well-suited for low-resource settings. (3) Our extensive experiments demonstrate that SPaCe consistently outperforms existing curriculum and data reduction strategies.

\section{Related Work}
\subsection{Efficient Methods to Enhance Language Models.} 
The role of data in post-training LLMs remains an open research question. One line of work studies reinforcement fine-tuning (RFT). Several recent methods focus on curating high-quality mathematical datasets \citep{deepscaler2025, yu2025dapoopensourcellmreinforcement}, but they do not explicitly investigate which data is most effective for fine-tuning. More recently, alternative approaches have explored heuristic-based scoring methods, such as Learning Impact Measurement \citep{li2025limrrlscaling} and variance-based data selection \citep{wang2025reinforcementlearningreasoninglarge}. These methods improve data efficiency by enabling training on only a small subset of the data while still achieving strong reasoning performance. However, they typically require substantial precomputation, which limits their practicality in real-world settings, and they have not been thoroughly evaluated on small models with limited reasoning ability. Another line of work explores efficient post-training methods for LLMs, including alignment learning \citep{ji2024aligner, do-etal-2025-sample} and steering methods \citep{turner2025steering, do-etal-2025-dynamic}. In this paper, however, we focus specifically on methods related to SFT.

\subsection{Curriculum Learning for LLMs.}
Humans and animals learn more effectively when examples are presented in a meaningful order that gradually increases in complexity. Curriculum learning \citep{10.1145/1553374.1553380} and performance-guided training progression \citep{le2022episodic} have been applied to supervised and RL training. For LLMs, recent studies have explored how to organize training data to reduce computational cost and improve sample efficiency, though this area remains underdeveloped. Existing approaches include hand-crafted difficulty tiers \citep{wen2025lightr1curriculumsftdpo, deepscaler2025, fastcurl}, which often require task-specific insights and manual tuning. More adaptive methods, such as AdaRFT \citep{AdaRFT}, learn a training curriculum by dynamically adjusting a difficulty threshold to select examples. While promising, these methods still face limitations: repeated training on easy examples can lead to overfitting or poor generalization; difficulty heuristics may not transfer across tasks; and fixed sampling strategies may fail to adapt to evolving model capabilities. There remains a need for more principled curriculum strategies tailored to the scale and dynamics of LLM training.

\section{Self-paced Reinforcement Fine-tuning}
\begin{figure*}[t]
  \centering  
  \includegraphics[width=0.8\linewidth]{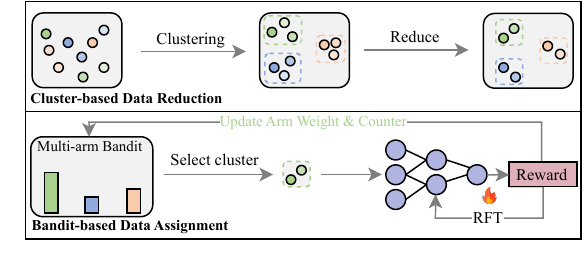}
    \caption{
      SPaCe Architecture. \textbf{Top:}
      In latent space, each initial training data's feature vector is formed by concatenating its latent embedding with its difficulty score. The data is clustered into \(K\) groups, and from each cluster we select representative samples to ensure both coverage and diversity. 
      \textbf{Bottom:}
      Each cluster is treated as an arm in a multi-armed bandit. At each step, Thompson Sampling is used to select a cluster, its representative examples are fed to the LLM to obtain rewards for RFT, and those rewards are used to update the bandit statistics.
    }
  \label{fig:SPaCe}
\end{figure*}

We aim to improve a policy $\pi_{\theta}$ by adaptively presenting training samples while minimizing the required data. Focusing on tasks that are too easy or too hard is inefficient, offering little challenge or feedback. Instead, data assignment should adapt to the model's evolving capabilities, presenting examples that it is ready to learn at each stage. Threshold-based curricula require manual tuning and can be suboptimal, especially at the early training stage~\citep{AdaRFT}. To address this, we introduce \textbf{SPaCe}, a two-phase approach for self-paced optimization: (1) cluster-based data reduction and (2) bandit-based data assignment. SPaCe integrates with common RFT algorithms; we use GRPO~\citep{deepseekai2025deepseekr1incentivizingreasoningcapability} by default.

\subsection{Cluster-based Data Reduction} \label{sec:SPaCe}
\subsubsection{Data Clustering}
\label{sec:clustering}
RFT of large language models often presupposes access to abundant, high-quality supervision, which is costly or impractical in low-resource settings. We introduce a clustering-based data reduction procedure that lowers data requirements while preserving or improving training efficacy. The approach leverages two signals per training instance: (i) a latent representation and (ii) a scalar, per-example attribute available from the dataset. Grouping examples that are proximate in latent space and exhibit comparable attribute values yields a coherent partitioning well-suited to curriculum learning.

\textit{Latent representation.} For each example, we obtain a latent representation using a pre-trained embedding model from SentenceTransformers \citep{reimers-2019-sentence-bert}. We decouple these clustering embeddings from the RL policy to avoid representation drift during RL training. To mitigate the degradation of distance metrics in high dimensions, we apply Principal Component Analysis (PCA) for dimensionality reduction \citep{pca}.

\textit{Per-example attribute.} Our framework supports the inclusion of an optional scalar attribute at the per-example level, alongside semantic embeddings. Let $d_i$ denote this attribute, which can flexibly encode any task-specific signal. For clustering, we concatenate $d_i$ with the latent embedding to form a joint representation. This design enables clustering to account not only for semantic similarity but also for structural or pedagogical cues that are important for curriculum construction.

\textit{Clustering.} Let $x_i$ denote the $i^\text{th}$ training example and $s_i$ its PCA-reduced latent vector. Before clustering, we standardize the coordinates of $s_i$ and $d_i$ to zero mean and unit variance, preventing either modality from dominating the distance metric. We then form the combined representation as follows:
\begin{equation}
    e_i \;=\; \hat{s}_i \oplus \hat{d}_i,
\end{equation}
where $\hat{s}_i$ and $\hat{d}_i$ are the standardized latent vector and scalar feature, respectively, and $\oplus$ denotes concatenation. Finally, we apply $k$-means clustering to $\{e_i\}$ to partition the dataset into $K$ clusters \citep{lloyd1982least}, thereby ensuring coverage across the joint latent–attribute space. We perform clustering only once before training to define a stable set of bandit arms. Fixing the arms ensures that each arm has a consistent meaning over time, so the scheduler can accumulate and compare per-arm statistics (e.g., solve rates and no-improvement counts) throughout training; re-clustering would change the arm identities and invalidate these statistics unless additional bookkeeping is introduced.

\subsubsection{Data Reduction}
Prior work on scaling RFT shows that carefully selected subsets of training data can outperform fine-tuning on the full corpus \citep{li2025limrrlscaling, wang2025reinforcementlearningreasoninglarge}. Motivated by this, the training set is partitioned into \(K\) clusters and a fixed quota \(l\) of examples is subsampled from each cluster to ensure balanced coverage and diversity. Let \(C_k\) denote the set of examples in cluster \(k\) with centroid \(\mu_k\) in the embedding space. For each example \(e_i \in C_k\), its distance to the centroid is computed as \(\delta_i = \lVert e_i - \mu_k \rVert_2\). Representative subsets are then selected via greedy farthest point sampling within each cluster: starting from the centroid, examples are iteratively chosen to maximize their distance from the already selected set. This preserves both centrality and geometric diversity within each cluster.

\subsection{Bandit-based Data Assignment}


Our key principle is to prioritize examples that are \emph{currently challenging and still learning-relevant}, since model capability evolves during training. At each step \(t\), a bandit \emph{scheduler} selects a cluster \(c_t\in\{1,\dots,K\}\) (arms correspond to clusters) using Thompson Sampling. For each cluster \(k\), we track cumulative reward \(R_k^{(t)}\) and pulls \(n_k^{(t)}\). We define the online solve rate as \(\bar r_k^{(t)} = R_k^{(t)}/(n_k^{(t)}+\epsilon)\), with \(\epsilon>0\) for numerical stability. This online rate is recomputed every step and is different from the offline difficulty \(d_i\) used only for clustering. Difficulty is defined as \(h_k^{(t)} := -\bar r_k^{(t)}\). Then, we use Thompson Sampling to draw a score \(\tilde\mu_k^{(t)}\) for each cluster and select the cluster with the largest draw:
\begin{equation}
\begin{aligned}
\tilde{\mu}_k^{(t)} &\sim \mathcal{N}\!\left( h_k^{(t)},\ \frac{1}{(n_k^{(t)}+\epsilon)}\right),\\
c_t &= \arg\max_{k\in\{1,\dots,K\}} \tilde{\mu}_k^{(t)}.
\end{aligned}
\end{equation}

In practice, we use a \emph{progress-aware} variant expressed directly in terms of hardness. For each cluster \(k\), we track whether its solve rate \(r_k^{(t)}\) remains \emph{informative} for learning: if \(r_k^{(t)}\) does not increase by at least a small tolerance for \(T_{\text{consecutive}}\) consecutive updates, we mark the cluster as \emph{stagnating} and subtract a small constant from its Thompson Sampling mean. This mechanism reduces the chance of repeatedly sampling clusters that remain hard but are no longer improving, thereby encouraging exploration among clusters of comparable hardness without overriding the standard Thompson Sampling trade-off.

This procedure naturally balances exploration and exploitation: clusters with fewer observations are more likely to be explored, while clusters with consistently low solve rates (i.e., harder samples) are sampled more frequently. A batch of size \(B\) is then drawn from cluster \(c_t\) to train \(\pi_\theta\). For each sample, we compute a binary correctness reward:
\begin{align}
r_i =
\begin{cases}
1, & \text{if the response is correct} \\
0, & \text{otherwise.}
\end{cases}
\end{align}

The average batch reward is \(r_{\text{avg}}=\frac{1}{B}\sum_{i=1}^{B} r_i\). This scalar signal updates the policy \(\pi_\theta\) via an RL algorithm (e.g., GRPO), by maximizing
\begin{equation}
\max_{\theta}\ \mathbb{E}_{q \sim D_{\text{train}},\, a \sim \pi_{\theta}}r_{\text{avg}},
\end{equation}
where \(a\) is the sampled answer from \(\pi_{\theta}\) given question \(q\). In addition, \(r_{\text{avg}}\) updates the bandit statistics for the selected cluster:
\begin{equation}
R_{c_t}^{(t+1)} = R_{c_t}^{(t)} + r_{\text{avg}}, \quad n_{c_t}^{(t+1)} = n_{c_t}^{(t)} + 1.
\end{equation}

This dynamic update ensures that the sampling distribution adapts online to the evolving state of the model: clusters that remain challenging receive more attention, while clusters that become easy are sampled less frequently. The full SPaCe procedure is provided in Algorithm~\ref{alg:algorithm} (Appendix~\ref{ref:algorithm}). We further analyze the convergence behavior of the scheduler in the following proposition.

\begin{proposition}
Under assumptions: (i) bounded rewards, LLM training with (ii) gradient clipping and (iii) decayed learning rate, the Thompson Sampling scheduler in SPaCe satisfies sublinear variation up to step $T$: \(V_T = O(\log T)\). Consequently, as \(t \to \infty\), the sampling distribution concentrates on clusters with maximal expected sampling score.
\end{proposition}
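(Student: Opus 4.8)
The plan is to control the \emph{total reward variation} $V_T := \sum_{t=1}^{T} \sup_k \bigl| \mu_k^{(t+1)} - \mu_k^{(t)} \bigr|$, where $\mu_k^{(t)}$ is the (time-varying) expected reward of cluster $k$ under the current policy $\pi_\theta$, and then invoke a known regret/consistency result for Thompson Sampling in slowly-drifting (``nonstationary'') bandits. First I would make the drift estimate: the only reason $\mu_k^{(t)}$ changes between steps is that $\pi_\theta$ is updated by one RFT step. Writing the expected reward as a functional $\mu_k(\theta)$ of the policy parameters and using that the per-sample reward is bounded in $[0,1]$ (assumption (i)), $\mu_k$ is Lipschitz in $\theta$ with some constant $L$ (bounded output times bounded score-function norm, or simply via total-variation bound on the induced answer distribution). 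Then $|\mu_k^{(t+1)} - \mu_k^{(t)}| \le L \, \|\theta_{t+1} - \theta_t\|$. By gradient clipping (assumption (ii)) the update direction has norm at most $G$, so $\|\theta_{t+1} - \theta_t\| \le \eta_t G$, giving
\begin{equation}
\bigl|\mu_k^{(t+1)} - \mu_k^{(t)}\bigr| \le L G \, \eta_t .
\end{equation}
Summing and using the decayed learning rate (assumption (iii)) with the standard choice $\eta_t = \Theta(1/t)$ yields $V_T \le L G \sum_{t=1}^T \eta_t = O(\log T)$, which is the claimed sublinear bound. (If instead $\eta_t = \Theta(t^{-\alpha})$ for $\alpha \in (0,1]$ one gets $V_T = O(T^{1-\alpha})$, still sublinear; I would state the $1/t$ case as the headline.)

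For the second conclusion—concentration of the sampling distribution on the cluster(s) of maximal expected reward—I would argue in two stages. Stage one: since $V_T$ is sublinear, after some finite time the per-step drift $LG\eta_t \to 0$, so the reward landscape effectively ``freezes'': there is a limiting gap $\Delta = \min_{k \neq k^\star}(\mu_{k^\star} - \mu_k)$ between the best arm and the rest (assuming, generically, a unique maximizer; otherwise the argument applies to the optimal set). Stage two: Thompson Sampling's Gaussian posterior for arm $k$ has mean $-R_k^{(t)}/(n_k^{(t)}+\epsilon)$ (the negated empirical solve rate, i.e.\ an unbiased-up-to-drift estimate of $\mu_k$) and variance $1/(n_k^{(t)}+\epsilon) \to 0$ as $n_k^{(t)} \to \infty$. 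Standard TS analysis shows every arm is pulled infinitely often (the vanishing-variance posteriors force exploration until counts grow), so $n_k^{(t)} \to \infty$ for all $k$; then the posterior means converge to the true $\mu_k$ (drift contributes only an $O(V_t/t)\to 0$ bias to the running average) and the posterior variances vanish, so $\Pr[c_t = k^\star] \to 1$. The cleanest route is to cite a nonstationary-TS regret bound of the form $\mathbb{E}[\mathrm{Regret}_T] = O\bigl(\sqrt{T \log T} + V_T\bigr)$ (or the switching/drifting-bandit analyses in the Thompson-sampling literature) and note that $\mathrm{Regret}_T = o(T)$ combined with vanishing posterior variance forces the sampling frequency of suboptimal arms to be $o(1)$.

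The main obstacle is the Lipschitz-in-$\theta$ claim for $\mu_k$, which is where all the hand-waving hides: an LLM policy is highly non-convex and the map $\theta \mapsto \pi_\theta(\cdot\mid q)$ is only locally Lipschitz with a constant that can blow up (e.g.\ through softmax temperature / logit magnitudes). I would handle this by either (a) invoking gradient clipping \emph{plus} an implicit assumption of bounded logits / bounded activations on the (finite) training set so that $L$ is finite on the trajectory's compact parameter region, or (b) reformulating the drift bound directly in terms of the change in the sampling policy, $|\mu_k^{(t+1)}-\mu_k^{(t)}| \le \max_q \|\pi_{\theta_{t+1}}(\cdot\mid q) - \pi_{\theta_t}(\cdot\mid q)\|_{\mathrm{TV}}$, and then bounding that TV-distance by the KL via Pinsker and the KL by the clipped step size—this is essentially the trust-region intuition underlying PPO/GRPO and is the most honest way to make assumptions (ii)-(iii) do the work. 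A secondary, more technical obstacle is that the bandit rewards here are themselves noisy (a batch solve rate, not the true mean $\mu_k^{(t)}$), so the posterior-mean convergence step needs a concentration argument (Hoeffding on the batch) layered on top of the drift bound; I would absorb this into the cited nonstationary-TS result, whose hypotheses already allow sub-Gaussian reward noise.
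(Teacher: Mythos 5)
Your proposal follows essentially the same route as the paper's proof: bound the per-step drift $|\mu_k^{(t+1)}-\mu_k^{(t)}|$ by a Lipschitz constant for $\mu_k(\theta)$ times the clipped step size $\eta_t G$, sum the $O(1/t)$ learning rates to get $V_T = O(\log T)$, and then invoke a nonstationary-bandit result (the paper cites Besbes et al.) for concentration on the optimal clusters. You even anticipate the paper's resolution of the Lipschitz issue --- it adopts exactly your option (a), assuming the clipped trajectory stays in a compact ball so that the extreme-value theorem yields a finite gradient bound $H$ --- so the two arguments coincide in every essential step.
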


\begin{proof}
See Appendix~\ref{sec:bandit_convergence}.
\end{proof}

\section{Experiments}



\begin{table*}[t]
\centering
\small
\begin{tabular}{l|cccc|cccc}
\toprule
& \multicolumn{4}{c}{Qwen3-0.6B} & \multicolumn{4}{c}{DeepSeek-R1-Distill-Qwen-1.5B} \\
\cmidrule(lr){2-5}\cmidrule(lr){6-9}
Method& GSM8K & MATH500 & AIME24 & AIME25
& GSM8K & MATH500 & AIME24 & AIME25 \\
\midrule
Base    & 78.0 & 75.4 & 10.0 & \underline{16.7} & 70.2 & 78.4 & 20.0 & 16.7 \\
$\pi_1$ & 78.1 & 75.4 & \underline{13.3} & \underline{16.7} & 69.5 & 76.4 & 20.0 & 20.0 \\
$\pi_2$ & \underline{79.0} & 74.2 & 6.7 & 10.0 & 68.0 & 79.4 & 23.3 & 20.0 \\
Ordered & 77.9 & 74.8 & \underline{13.3} & 13.3 & \underline{71.0} & \underline{80.6} & \textbf{26.7} & 23.3 \\
SFT     & 77.6$_{0.5}$ & 74.8$_{0.4}$ & 7.8$_{1.9}$ & 14.4$_{5.1}$ & 70.4$_{0.9}$ & 79.7$_{0.8}$ & 15.6$_{1.9}$ & 18.9$_{1.9}$ \\
R1      & 77.9$_{1.0}$ & 71.6$_{1.3}$ & 8.9$_{5.1}$ & 12.2$_{5.1}$ & 70.7$_{1.1}$ & 80.2$_{0.9}$ & 18.9$_{1.9}$ & \underline{24.4$_{3.9}$} \\
AdaRFT  & 78.9$_{0.3}$ & \underline{75.9$_{1.5}$} & 12.2$_{3.9}$ & 14.4$_{3.9}$ & 69.9$_{0.2}$ & 76.5$_{1.5}$ & \textbf{26.7$_{3.3}$} & 16.7$_{6.7}$ \\
\rowcolor{gray!20}
\textbf{SPaCe} & \textbf{79.8$_{0.3}$} & \textbf{78.2$_{1.0}$} &
\textbf{20.0$_{5.8}$} & \textbf{18.9$_{5.1}$} &
\textbf{72.4$_{1.2}$} & \textbf{81.1$_{1.0}$} & \underline{25.6$_{3.9}$} & \textbf{26.7$_{5.8}$} \\
\bottomrule
\end{tabular}

\caption{Results with Qwen3-0.6B and DeepSeek-R1-Distill-Qwen-1.5B base LLMs trained on the DeepScaleR-Uniform dataset across multiple benchmarks. We report extractive match scores (\( \text{mean}_{\text{std}} \)) at the final training checkpoint, averaged over 3 seeds (except for the Base, \( \pi_1 \), \( \pi_2 \), and Ordered baselines). Best results are highlighted in bold, and the second-best are underlined.}
\label{tab:main_results}
\end{table*}

We evaluate the proposed method on multiple LLMs. Full fine-tuning is conducted on \textit{Qwen3-0.6B}, \textit{DeepSeek-R1-Distill-Qwen-1.5B}, \textit{Qwen2.5-0.5B-Instruct}, \textit{Falcon3-1B-Instruct}, and \textit{Llama3.2-1B-Instruct} using a single NVIDIA H100 GPU; notably, \textit{Qwen3-0.6B} attains performance comparable to larger models (e.g., \textit{Qwen2.5-Math-7B-Instruct}). We additionally fine-tune \textit{Qwen3-8B-Base} with LoRA \citep{hu2021loralowrankadaptationlarge} on a single H200 GPU. Experiments cover mathematical and logical reasoning benchmarks: DeepScaleR subsets (Uniform, Easy, and Difficult) (10k each) \citep{deepscaler2025}, GSM8K \citep{cobbe2021gsm8k}, and Knights and Knaves \citep{xie2024memorization}. Latent embeddings for clustering are extracted using \textit{Qwen3-Embedding-0.6B}. All datasets include difficulty annotations, either derived from solve rates of a moderate LLM \citep{AdaRFT} or provided as explicit labels \citep{xie2024memorization}; difficulty is used as the per-example clustering attribute due to its interpretability. The total number of training examples is \(l \times K\), where \(l\) is the per-cluster quota and \(K\) the number of clusters (reported in Table~\ref{tab:number_of_clusters}). Across all settings, \(K \leq 10\), yielding at most 100 training examples with \(l=10\). Each experiment is repeated with three random seeds and implemented using the Open-R1 codebase \citep{openr1}. The method introduces negligible runtime overhead relative to the R1 baseline; detailed timing results are provided in Figure~\ref{fig:trainingtime} in the appendix.

\paragraph{Evaluation}
We use five benchmarks that span different reasoning types and difficulty levels:  \textbf{GSM8K} \citep{cobbe2021gsm8k}, consists of diverse grade school math problems; \textbf{MATH500}, a 500-sample subset of the MATH dataset \citep{hendrycksmath2021}; \textbf{AIME24} and \textbf{AIME25}, comprising problems from the 2024 and 2025 American Invitational Mathematics Examination, respectively; and finally, the logical reasoning \textbf{K\&K} test set consists of 700 samples, with 100 examples for different number of people in the question from 2 to 8 \citep{xie2024memorization}. We report the extractive match scores for all mathematical datasets, following Lighteval's evaluation framework \citep{lighteval}. For K\&K dataset, we follow the evaluation protocols established by the dataset authors \citep{xie2024memorization}.

\paragraph{Baselines}
\textbf{Base} refers to the pretrained model without any fine‑tuning. \textbf{$\pi_1$} and \textbf{$\pi_2$} represent the baselines in \textbf{1-shot RLVR} paper, trained on one and two examples selected from the DeepScaleR dataset, respectively \citep{wang2025reinforcementlearningreasoninglarge}. \textbf{SFT} denotes the supervised fine‑tuning baseline. \textbf{Ordered} \citep{10.1145/1553374.1553380} is a curriculum baseline in which training begins with easier examples and gradually progresses to harder ones. \textbf{R1} is the RL baseline trained with the standard GRPO algorithm without an SFT cold start, as in DeepSeek-R1 \citep{deepseekai2025deepseekr1incentivizingreasoningcapability}. \textbf{AdaRFT} is a curriculum learning approach that selects examples based on a difficulty threshold \citep{AdaRFT}. We note that SFT, Ordered, R1, and AdaRFT baselines are trained on the full datasets. We also include the variance-based baseline \textbf{LIM} \citep{li2025limrrlscaling}, trained on MATH \citep{hendrycksmath2021}; details and results are in Appendix~\ref{sec:limr} due to its different training data.

\begin{figure*}[t]
  \centering
  \includegraphics[width=0.85\linewidth]{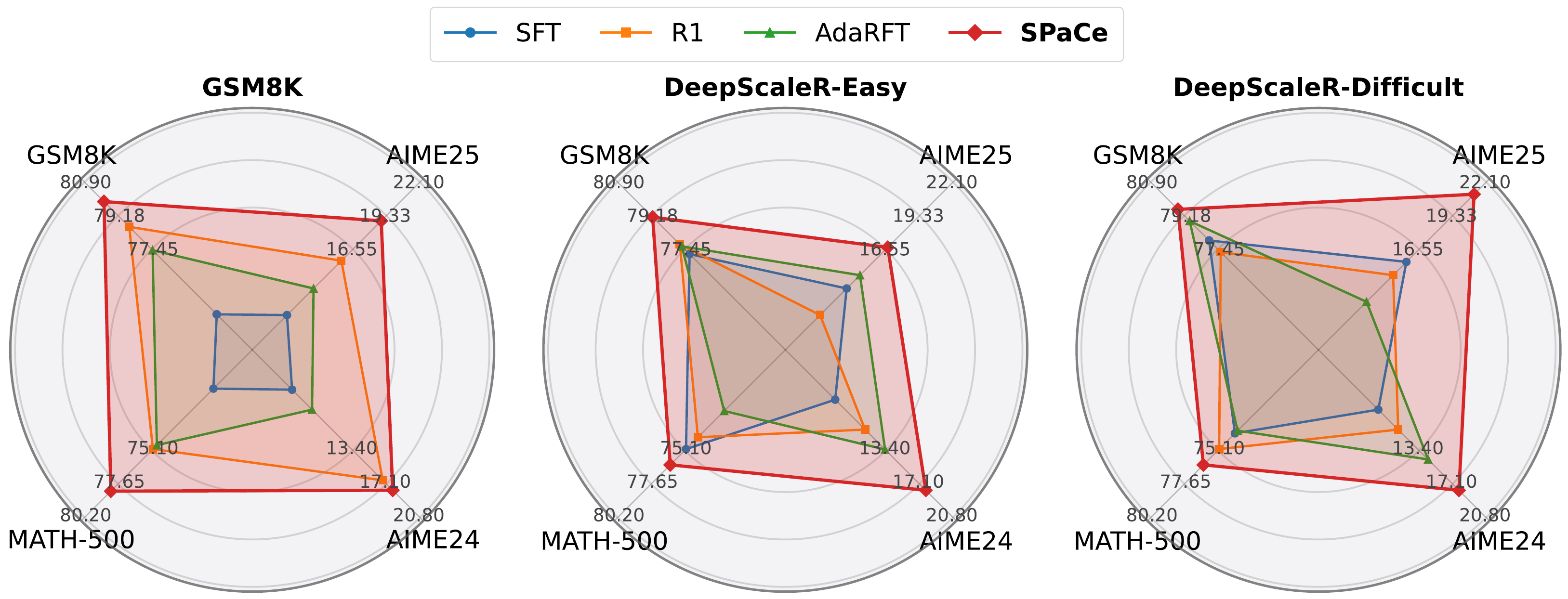}
    \caption{
      Results averaged over 3 training seeds using three training datasets with Qwen3-0.6B.
    }
  \label{fig:different_training_sets}
\end{figure*}

\section{Experimental Results}
\subsection{Mathematical Reasoning Benchmark} \label{sec:deepscaler_training}
We fine-tune \textit{Qwen3-0.6B} and \textit{DeepSeek-R1-Distill-Qwen-1.5B} in a zero-shot setting and report their performance on four mathematical benchmarks. Despite their small size, they serve as strong backbones for mathematical reasoning. Especially, \textit{Qwen3-0.6B} supports a \textit{thinking mode} enabled via \texttt{\textless think\textgreater} \texttt{\textless /think\textgreater} tags. Following~\citep{wang2025reinforcementlearningreasoninglarge}, we use a single seed for \(\pi_1\) and \(\pi_2\) baselines. For the \textit{Ordered} baseline, we likewise use one seed, since the training example order is fixed. Additional training hyperparameters are given in Appendix~\ref{appendix:SPaCe_details}.

Table~\ref{tab:main_results} presents test accuracies from the final training checkpoints using \textit{Qwen3-0.6B} and \textit{DeepSeek-R1-Distill-Qwen-1.5B}. SPaCe consistently achieves the highest performance across all benchmarks. On \textit{Qwen3-0.6B}, SPaCe attains the best score on all four benchmarks, improving over the Base model by $1.8\%$ on GSM8K and $2.8\%$ on MATH500, with larger gains on the harder AIME sets ($+5.6\%$ on AIME24 and $+2.2\%$ on AIME25). On \textit{DeepSeek-R1-Distill-Qwen-1.5B}, SPaCe improves over Base on GSM8K ($+2.2\%$), MATH500 ($+2.7\%$), AIME24 ($+5.6\%$), and AIME25 ($+10.0\%$), achieving the best result on three out of four benchmarks and remaining within $1.1\%$ of the best method on AIME24. Overall, SPaCe provides consistent improvements relative to the included baselines, with particularly noticeable gains on the AIME benchmarks.

\subsection{SPaCe Works With Various Datasets} \label{sec:gsm8k_training}
\subsubsection{Mathematical Datasets Training Results} We evaluate our method on three distinct training sets: (1) GSM8K; (2) DeepScaleR–Easy, a subset of DeepScaleR with primarily low‑difficulty questions; and (3) DeepScaleR–Difficult, a subset with mainly high‑difficulty questions. All experiments use \textit{Qwen3-0.6B} as the backbone, and we adopt the same reward functions and hyperparameters as in Section~\ref{sec:deepscaler_training}.We compare against the top-3 baselines, excluding DeepScaleR-specific baselines $\pi_1$ and $\pi_2$. As seen in Figure~\ref{fig:different_training_sets}, across all settings, SPaCe consistently outperforms standard SFT, achieving gains of 2.9–5.5 \% on GSM8K and up to 7.8 \% on the AIME benchmarks. R1 generally ranks second, especially on the easier splits, while AdaRFT falls 1–2 \% behind in most cases. Notably, when trained on the difficult subset (3), SPaCe attains a 2.3 \% improvement on MATH500 and more than doubles AIME24 accuracy relative to SFT. These results confirm that SPaCe not only enhances overall accuracy but also yields the greatest benefits on the most challenging training set.

\begin{table*}[t]
\centering
\setlength{\tabcolsep}{3pt}
\small
\begin{tabular}{lccccccc|c}
\toprule
& \multicolumn{7}{c}{\textbf{Number of People}} & \textbf{Average} \\
\cmidrule(lr){2-8}
\textbf{Method} & \textbf{2} & \textbf{3} & \textbf{4} & \textbf{5} & \textbf{6} & \textbf{7} & \textbf{8} & \\
\midrule
Base & 32.0 & 10.0 & 8.0 & 2.0 & 0.0 & 0.0 & 0.0 & 7.4 \\

R1 & 31.7$\pm$1.5 & 11.3$\pm$0.6 & 8.3$\pm$0.6 & 4.0$\pm$1.0 & 0.7$\pm$0.6 & 0.0$\pm$0.0 & 1.0$\pm$0.0 & 8.1$\pm$0.3 \\

AdaRFT & 31.7$\pm$0.6 & 10.7$\pm$2.1 & 7.3$\pm$0.6 & 3.7$\pm$0.6 & 0.3$\pm$0.6 & 0.0$\pm$0.0 & 1.0$\pm$0.0 & 7.8$\pm$0.2 \\

\rowcolor{gray!20}
\textbf{SPaCe} & \textbf{34.3$\pm$1.2} & \textbf{15.7$\pm$3.1} & \textbf{10.7$\pm$1.5} & \textbf{5.7$\pm$0.6} & \textbf{1.7$\pm$0.6} & 0.0$\pm$0.0 & \textbf{1.3$\pm$0.6} & \textbf{9.9$\pm$1.0} \\
\bottomrule
\end{tabular}
\caption{Accuracy (\%) by number of people in K\&K puzzles with results reported as mean $\pm$ standard deviation (except for Base baseline) over 3 runs using \textit{Qwen3-0.6B} as the base model. Bold denotes the best mean performance.}
\label{tab:kk-difficulty}
\end{table*}

\begin{figure*}[t]
  \centering
  \includegraphics[width=\linewidth]{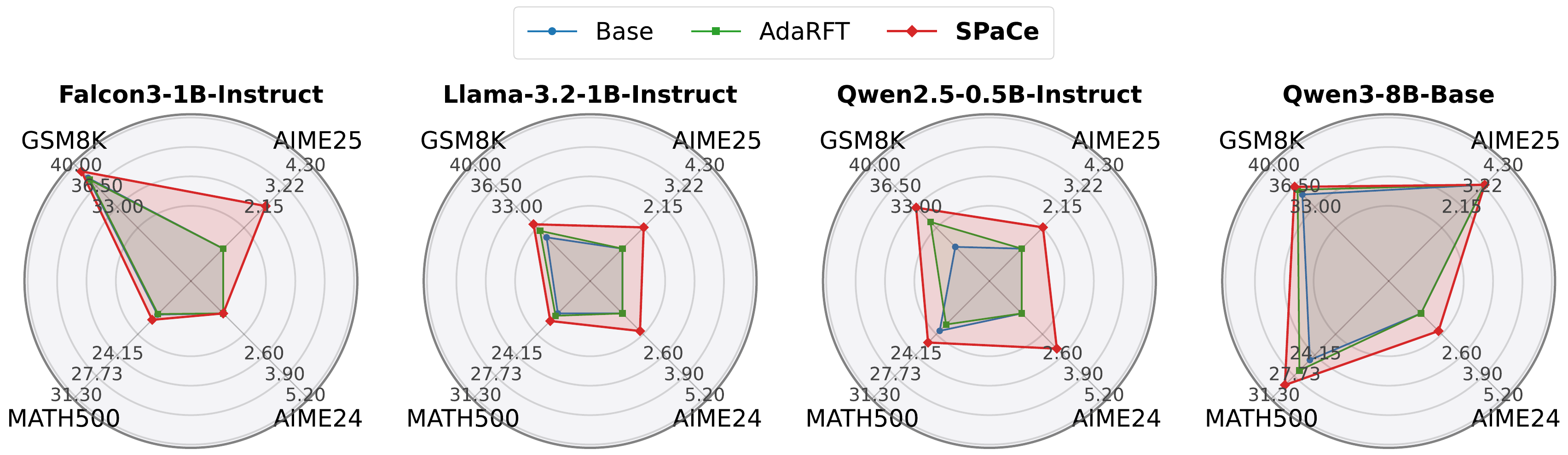}
    \caption{
      Results averaged over 3 training seeds using other LLMs across datasets. 
    }
  \label{fig:different_llms}
\end{figure*}

\subsubsection{K\&K Training Results}

In this dataset, we consider the number of people in each question as the per-example attribute \(d_i\) and use \textit{Qwen3-0.6B} as the base LLM. The final answer is used to compute the accuracy reward, and the original evaluation protocol \citep{xie2024memorization} is followed for consistency and comparability. As shown in Table~\ref{tab:kk-difficulty}, SPaCe consistently outperforms all baselines across difficulty levels, with especially clear gains on harder puzzles with 2--5 people, where reasoning demands are higher. It achieves the highest overall accuracy of 9.9\%, versus 8.1\% for R1 and 7.8\% for AdaRFT, which shows relative improvements of $22.2\%$ and $26.9\%$, respectively. AdaRFT’s accuracy drops in 2 of 7 settings, suggesting that a noisy curriculum can harm performance. Overall, these results show that SPaCe scales effectively to harder reasoning cases while remaining competitive on simpler ones, validating curriculum-guided selection for reasoning-focused training.

\subsection{SPaCe Helps Diverse LLM Learners} \label{sec:extension_llms}
We train on DeepScaleR-Uniform and evaluate \textit{Qwen2.5-0.5B-Instruct}, \textit{Falcon3-1B-Instruct}, \textit{Llama3.2-1B-Instruct}, and \textit{Qwen3-8B-Base}, which are compact to mid-size LLMs with strong reasoning, language, code, and math skills. We exclude \textit{Qwen3-8B} (reasoning-enabled) due to the substantial compute from long \texttt{<think>} traces. We compare against the Base model and AdaRFT, the most consistent and second-best method in Section~\ref{sec:deepscaler_training}. All models use the same zero-shot setup, except \textit{Llama3.2-1B-Instruct}, which requires one in-context example per instance to yield valid correctness rewards \citep{le2025reasoning}. Figure~\ref{fig:different_llms} shows SPaCe as the clear winner. Average gains are evident on GSM8K (\mbox{$\sim$+3}\%) and MATH500 (\mbox{$\sim$+2}\%). On the harder AIME splits, SPaCe turns near-zero baseline scores into consistent positives, reflecting better sample efficiency under sparse-reward RFT. We see complete or near-complete sweeps on the smaller models over Base and AdaRFT, indicating benefits in capacity-constrained settings; results on \textit{Qwen3-8B-Base} remain strong despite not being used for RFT training. Overall, a bandit-driven, performance-aware curriculum generalizes across architectures and tasks with minimal protocol changes, delivering reliable gains under compute-conscious budgets.

\begin{figure*}[t]
    \centering
    \includegraphics[width=\linewidth]{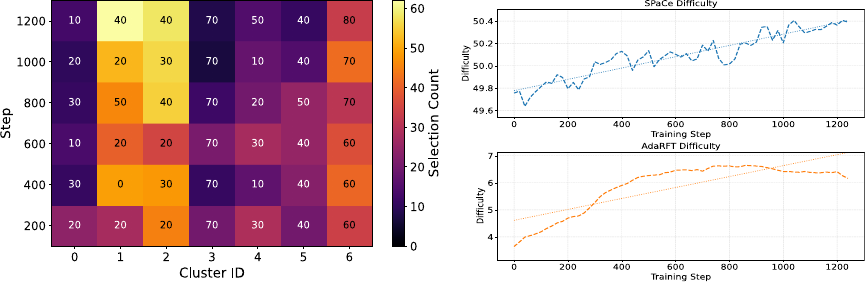}
    \caption{\textbf{Left:} Multi-arm bandit cluster selection (heatmap) with per-cluster solve rates (percentages) annotated in each cell across training steps. \textbf{Right:} \textbf{Top:} Difficulty of examples selected by SPaCe over time. \textbf{Bottom:} Difficulty of examples selected by AdaRFT over time.}
    \label{fig:selected_question_difficulties}
\end{figure*}



\section{Ablation Studies and Model Analyses}
\subsection{Multi-arm Bandit Analysis} 
\paragraph{Empirical Convergence}
SPaCe leverages the MAB framework to adaptively guide curriculum learning, making it important to characterize how the scheduler evolves during training. Figure~\ref{fig:selected_question_difficulties} (Left) illustrates a heatmap of cluster solve rates alongside bandit selections over time using \textit{Qwen3-0.6B} on the DeepScaleR-Uniform dataset partitioned into 7 clusters. Early in training (around step 200), the bandit behaves nearly uniformly, allocating samples across clusters with little preference. As the model accumulates experience, clear patterns emerge: clusters 1 and 2 exhibit higher solve-rate differentials, and the bandit correspondingly shifts toward sampling them more frequently, signaling that these clusters provide greater marginal learning benefit. From step 600 onward, this concentration intensifies, with clusters 1 and 2 dominating the selection distribution, indicating that the scheduler successfully adapts to focus training on regions of the data that remain most informative for continued performance improvement.

\paragraph{Solve Rate Trends.}
Clusters 1 and 2 deliver the largest gains, with solve rate rising from 20\% at step 200 to 40\% by step 1200, indicating moderate difficulty and strong learning signals. In contrast, cluster 3 is rarely selected and stays flat at around 70\%, suggesting it is too easy to drive improvement. Clusters 4 and 5 show smaller gains under continued exploration, while cluster 0 provides little benefit. By the end of training, the bandit concentrates on clusters 1 and 2, which consistently yield the highest returns.

\begin{table}[t]
\centering
\setlength{\tabcolsep}{3pt}
\setlength{\arrayrulewidth}{0.4pt}
\small
\resizebox{\columnwidth}{!}{
\begin{tabular}{l|cccc}
\toprule
Method 
& GSM8K & MATH500 & AIME24 & AIME25 \\
\midrule
Base & 78.0 & 75.4 & 10.0 & 16.7\\
SPaCe$^{-}$ 
& 78.8$\pm$0.4 & 76.4$\pm$0.7 & 12.2$\pm$3.9 & 11.1$\pm$1.9 \\
\rowcolor{gray!20}
SPaCe 
& \textbf{79.8$\pm$0.3} & \textbf{78.2$\pm$1.0} & \textbf{20.0$\pm$5.8} & \textbf{18.9$\pm$5.1} \\
\bottomrule
\end{tabular}
}
\caption{
Mean~$\pm$~std over 3 seeds on DeepScaleR-uniform using Qwen3-0.6B.
SPaCe$^{-}$ denotes the variant without data reduction.
Best results in bold.
}
\label{tab:data_selection_impact}
\end{table}

\subsection{Data Reduction Analysis}
\paragraph{Average Sample Difficulty Comparison} 
We consider \textit{Qwen3-0.6B} on DeepScaleR-Uniform with 7 clusters to examine selected example difficulties. Figure~\ref{fig:selected_question_difficulties} (Right) compares the average difficulty of training examples chosen over time by SPaCe and AdaRFT. Although AdaRFT accesses the full dataset, its performance with LLMs struggles to reach medium and hard examples, due to a threshold mechanism that is highly sensitive and skews selection. In contrast, SPaCe favors medium to hard instances, avoiding the overemphasis on easy examples seen in AdaRFT. We attribute this to our clustering strategy, which captures both semantic diversity and difficulty: each cluster mixes a broad range of examples, enabling exploration of harder cases without sacrificing variety.

\paragraph{Impact of Data Reduction}
To assess the data reduction phase, we ablate selection by keeping all examples in each cluster. As shown in Table~\ref{tab:data_selection_impact}, this variant (SPaCe$^{-}$) still outperforms the Base baseline but underperforms full SPaCe. Without reduction, the number of examples per cluster can exceed the batch size \(B\), increasing variance and within-batch difficulty heterogeneity, which weakens the learning signal.

\begin{figure*}[t]  
    \centering
    \includegraphics[width=0.8\linewidth]{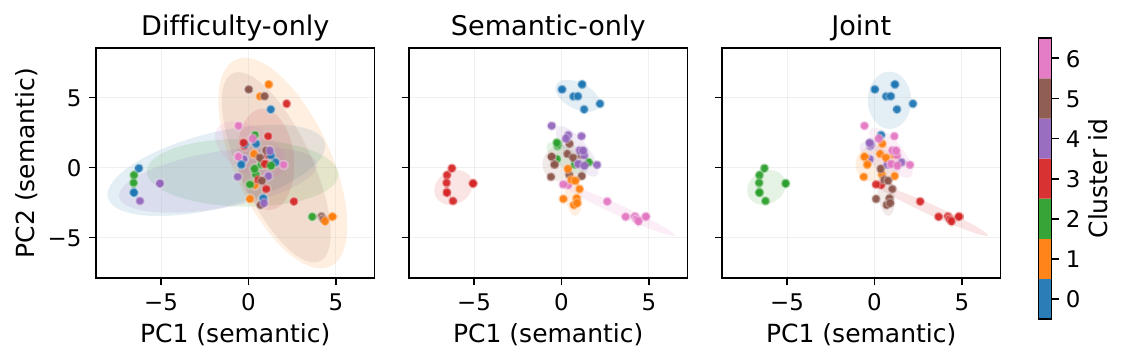}
    \caption{Clustering variants in semantic space with 7 clusters using DeepScaleR-Uniform.}
    \label{fig:clustering_effects}
\end{figure*}

\begin{figure*}[t]  
    \centering
    \includegraphics[width=0.8\linewidth]{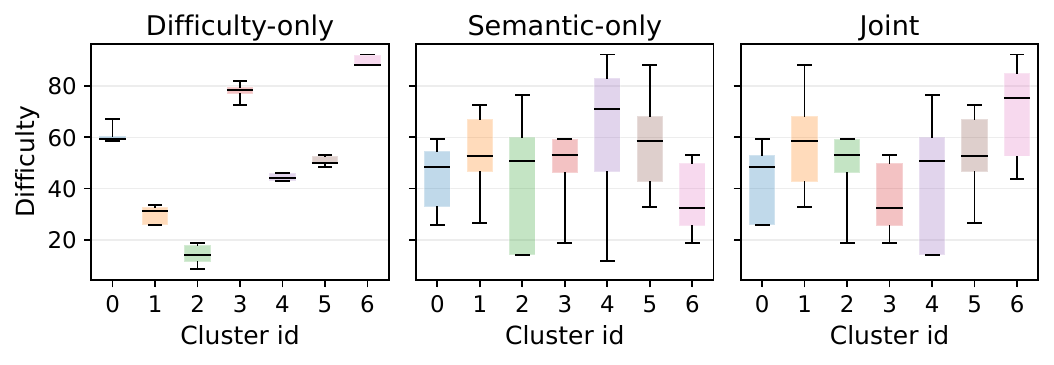}
    \caption{Difficulty of three clustering variants in semantic space with 7 clusters using DeepScaleR-Uniform.}
    \label{fig:difficulty_boxplot}
\end{figure*}

\subsection{Clustering Effects} \label{sec:clustering_effects}
\paragraph{Clustering Analysis}
We further investigate the effect of training-data clustering. Figure~\ref{fig:clustering_effects} visualizes the resulting clusters in a shared semantic space and highlights clear differences across clustering strategies. When clustering is based only on difficulty, examples with very different semantic content are often grouped together simply because they share similar hardness levels. This produces clusters that are less coherent in terms of task or content structure. In contrast, semantic-only clustering groups together examples that are topically similar, but the corresponding hardness regions remain highly overlapping, making it difficult to separate clusters by learning difficulty alone. Joint clustering, which combines semantic representations with per-example difficulty, achieves a better balance: the resulting clusters remain semantically localized while also exhibiting clearer separation along the difficulty dimension.

Figure~\ref{fig:difficulty_boxplot} further supports this observation by showing the distribution of difficulty scores within each cluster. Semantic-only clusters tend to span broad and overlapping difficulty ranges, suggesting that semantic similarity alone is insufficient for forming well-structured curriculum units. By comparison, joint clustering produces tighter and more coherent difficulty bands, while still avoiding the degenerate behavior of purely difficulty-based clustering, where semantic diversity within each cluster becomes too large. Taken together, these findings suggest that concatenating embeddings with per-example difficulty provides a principled way to construct curriculum units that preserve semantic similarity while also controlling variation in difficulty. This balance is particularly desirable for curriculum learning, where both semantic coherence and gradual difficulty progression are important for effective training.

\subsection{Other Ablation Studies} 
We also ablate (i) paired statistical testing, (ii) number of clusters, (iii) diverse sample selection, (iv) embedding model choice, (v) number of PCA components, (vi) samples per cluster, (vii) removing difficulty effect, (viii) selected-sample difficulty, (ix) dataset distribution, (x) cluster properties and (xi) training time (Appendices~\ref{sec:paired_stats}, \ref{sec:number_clusters}, \ref{sec:diverse_sample_selection}, \ref{sec:embedding_model_choice}, \ref{sec:number_pca_component}, \ref{sec:number_samples_each}, \ref{sec:remove_difficulty}, \ref{sec:selected_training_samples}, \ref{sec:datset_dist}, \ref{sec:cluster_analysis}, \ref{sec:trainingtime}). Across these dimensions, the results consistently support the robustness of our approach, clarify trade-offs and hyperparameter sensitivities, and offer practical guidance for default settings.

\section{Conclusion}
We introduced \textbf{SPaCe}, a lightweight framework that enables efficient reasoning in language models through clustering and adaptive curriculum learning. SPaCe selects compact, diverse training subsets and dynamically adapts training focus based on model performance. Experiments show that SPaCe achieves competitive accuracy with significantly fewer samples. These results highlight the effectiveness of combining semantic clustering with performance-driven curricula to unlock reasoning in small models using minimal resources.

\section*{Limitations}
While \textsc{SPaCe} consistently improves training efficiency and performance across our evaluated settings, our experiments are limited to models with fewer than 8B parameters. Further evaluation on larger models would be valuable to better assess scalability and generality.

\bibliography{custom}
\newpage

\appendix
\section{Appendix}
\label{appendix}
\subsection{Algorithm for SPaCe} \label{ref:algorithm}
In this section, we provide the pseudo-code for SPaCe in Algorithm \ref{alg:algorithm}.

\begin{algorithm}[h]
\caption{SPaCe}
\label{alg:algorithm}
\textbf{Input}: Policy \(\pi_\theta\), Dataset \(\mathcal{D}\), Embedding model \(\phi\), Clusters \(K\), Batch size \(B\), RL algorithm \(\mathcal{A}\), \(\epsilon>0\), \(\delta>0\), \(T_{\text{consecutive}}\), \(\gamma>0\) \\
\textbf{Output}: Trained policy \(\pi_\theta\)
\begin{algorithmic}[1]
\STATE \textbf{// Phase 1: Cluster-based Data Reduction}
\FOR{each \(x_i \in \mathcal{D}\)}
    \STATE \(e_i \leftarrow \text{PCA}(\phi(x_i)) \oplus \text{difficulty}(x_i)\)
\ENDFOR
\STATE Run K-means on \(\{e_i\}\) to form \(\{C_k\}_{k=1}^K\); pick \(l\) diverse samples per cluster \(\Rightarrow \mathcal{D}_{\text{train}}\)

\STATE \textbf{// Phase 2: Bandit-driven Curriculum}
\FOR{each \(k=1,\dots,K\)} \STATE \(R_k\!\leftarrow\!0,\ n_k\!\leftarrow\!0,\ \texttt{no\_improve}_k\!\leftarrow\!0\) \ENDFOR
\WHILE{training not finished}
    \FOR{each \(k=1,\dots,K\)}
        \STATE \(h_k \leftarrow -\frac{R_k}{n_k+\epsilon}\) 
        \STATE \(m_k \leftarrow h_k - \gamma\cdot\mathbb{I}[\texttt{no\_improve}_k \ge T_{\text{consecutive}}]\)
        \STATE \(\tilde{\mu}_k \sim \mathcal{N}\!\left(m_k,\ \frac{1}{n_k+\epsilon}\right)\)
    \ENDFOR
    \STATE \(c_t \leftarrow \arg\max_k \tilde{\mu}_k\); sample \(X \subset C_{c_t}\), \(|X|=B\)
    \STATE \(G \leftarrow \pi_\theta(X)\); compute \(r_{\text{avg}}=\frac{1}{B}\sum_{i=1}^B r_i\), \(r_i\in\{0,1\}\)
    \STATE \(\pi_\theta \leftarrow \mathcal{A}(\pi_\theta, X, G, r_{\text{avg}})\)
    \STATE \(R_{c_t} \leftarrow R_{c_t}+r_{\text{avg}},\quad n_{c_t} \leftarrow n_{c_t}+1\)
    \STATE \(\texttt{no\_improve}_{c_t} \leftarrow \mathbb{I}\!\left[\frac{R_{c_t}}{n_{c_t}+\epsilon} < \frac{R_{c_t}-r_{\text{avg}}}{(n_{c_t}-1)+\epsilon} + \delta\right]\)
\ENDWHILE
\STATE \textbf{return} \(\pi_\theta\)
\end{algorithmic}
\end{algorithm}

\subsection{Convergence of the Thompson Sampling Scheduler}
\label{sec:bandit_convergence}

We analyze the convergence of the Thompson Sampling scheduler used in SPaCe. Each data cluster is treated as an arm in a multi-armed bandit. At step \( t \), let \( \pi_{\theta^{(t)}} \) denote the model with parameters \( \theta^{(t)} \). The expected reward (solve rate) of cluster \( \mathcal{C}_k \) is defined as:
\begin{equation}
\mu_k^{(t)} = \mathbb{E}_{x \sim \mathcal{C}_k} \left[\Pr(\pi_{\theta^{(t)}}(x) =\text{ correct})\right].    
\end{equation}
where \( \Pr\left(\pi_{\theta^{(t)}}(x) = \text{correct}\right) \) denotes the probability that the model produces a correct answer for input \( x \).

We already have: (1) the model is trained using gradient clipping with threshold \( G_{\max} \); (2) the learning rate \( \alpha_t \) follows a cosine decay schedule with warmup and is therefore non-increasing and vanishes as \( t \to \infty \); and (3) the expected rewards satisfy \( \mu_k^{(t)} \in [0,1] \) for all clusters \( k \). These three properties are directly enforced in the SPaCe implementation.

To complete the convergence analysis, we now bound the drift of each cluster’s expected reward.  Define
\begin{equation}
f_k(\theta)\;=\;\mathbb{E}_{x \sim \mathcal{C}_k}\bigl[\Pr\bigl(\pi_\theta(x)\ =\text{correct}\bigr)\bigr],  
\end{equation}

where \(f_k\) is the \emph{cluster-level reward surface} for arm \(k\), and note that since each layer of our base model is continuously differentiable, so is \(f_k(\theta)\) \citep{Goodfellow-et-al-2016}. We assume that, in practice, gradient clipping at norm \(G_{\max}\) together with a bounded initialization prevents the parameters \(\{\theta^{(t)}\}\) from diverging excessively, effectively keeping them in some large but fixed ball \(\{\|\theta\|\le R\}\).  Empirical studies on LLM training have repeatedly observed that clipped updates under cosine‑decay schedules yield stable trajectories without catastrophic parameter growth \citep{wang2025adagcimprovingtrainingstability, huang2025spam}.  

Under this assumption, the extreme‐value theorem \citep{rudin1976principles} guarantees the existence of a constant \(H<\infty\) such that
\begin{equation}    
\|\nabla_\theta f_k(\theta)\|\;\le\; H
\quad\forall\ \|\theta\|\le R.
\end{equation}
Moreover, each gradient step with cosine‑decay learning rate \(\alpha_t\) and gradient clipping satisfies
\begin{equation}    
\|\theta^{(t+1)} - \theta^{(t)}\|\;\le\;\alpha_t\,G_{\max}.
\end{equation}
Applying the mean‐value theorem \citep{rudin1976principles} then yields
\begin{align}
\bigl|\mu_k^{(t+1)} - \mu_k^{(t)}\bigr|
&=\bigl|f_k(\theta^{(t+1)}) - f_k(\theta^{(t)})\bigr|\nonumber\\
&\le H\,\|\theta^{(t+1)} - \theta^{(t)}\|\nonumber\\
&\le H\,G_{\max}\,\alpha_t
\;=\;\varepsilon_t,
\end{align}
where \(\varepsilon_t\to0\) as \(\alpha_t\to0\).  Thus, we obtain the desired vanishing drift
\(\bigl|\mu_k^{(t+1)} - \mu_k^{(t)}\bigr|\le\varepsilon_t\)
for every cluster \(k\).  

Let \( V_T = \sum_{t=1}^{T-1} \max_k |\mu_k^{(t+1)} - \mu_k^{(t)}| \) denote the total reward variation up to step \( T \). The bound above implies
\begin{equation}    
V_T \le \sum_{t=1}^{T-1} \varepsilon_t.
\end{equation}

Let \(A_T := \sum_{t=1}^{T-1}\alpha_t\) denote the cumulative step size up to step \(T\). Combining the drift bound with the definition of \(V_T\) gives
\begin{equation}
V_T \;\le\; \sum_{t=1}^{T-1}\varepsilon_t
\;=\; H\,G_{\max}\sum_{t=1}^{T-1}\alpha_t
\;=\; H\,G_{\max}\,A_T.
\end{equation}
For the cosine decay schedule with warmup used in SPaCe, the learning rate is non-increasing after warmup and reaches \(0\) at the end of the prescribed training horizon \(T_{\text{train}}\); we can equivalently take \(\alpha_t=0\) for all \(t>T_{\text{train}}\). Hence \(A_T\) is finite and bounded by
\begin{equation}
A_T \;\le\; \sum_{t=1}^{T_{\text{train}}}\alpha_t \;=\; O(T_{\text{train}}\alpha_{\max}),
\end{equation}
which implies that the total reward variation \(V_T\) is also bounded (and therefore sublinear as \(T\to\infty\) under the extension \(\alpha_t=0\) for \(t>T_{\text{train}}\)). This bounded variation budget matches the standard non-stationary bandit setting studied in prior work \citep{besbes2014stochastic}, and supports the use of Thompson Sampling as a scheduler that tracks the best-performing cluster as training progresses and the drift \(\varepsilon_t\) becomes small. In particular, as \(\alpha_t\to 0\), we have \(\varepsilon_t\to 0\), so the cluster rewards become effectively stationary, and Thompson Sampling concentrates its selections on the cluster(s) with the highest current expected reward.

\subsection{Additional Results}

In this section, we provide the full results for Qwen3-8B-Base in Table \ref{tab:qwen3_8b_base} and Meta-Llama3-8B in Table \ref{tab:llama3_8b}. These new experiments confirm that the gains of our method persist when scaling to the 8B regime, addressing the concern that our findings may be specific to small models.

\begin{table*}[t]
\centering
\caption{Qwen3-8B-Base results (accuracy $\pm$ std).} 
\label{tab:qwen3_8b_base}
\begin{tabular}{lcccc}
\hline
Method & GSM8K & MATH500 & AIME24 & AIME25 \\
\hline
Base & 35.1 & 25.0 & 0.0 & \textbf{3.3} \\
Method A & 35.0 & 28.0 & \textbf{1.1} & 0.0 \\
Method B & 34.7 & 28.2 & 0.0 & 0.0 \\
Ordered & 34.6 & 26.0 & 0.0 & 0.0 \\
SFT & 34.0 $\pm$ 0.7 & 26.4 $\pm$ 0.6 & 0.0 $\pm$ 0.0 & 0.0 $\pm$ 0.0 \\
R1 & 35.5 $\pm$ 1.3 & 27.0 $\pm$ 0.6 & \textbf{1.1 $\pm$ 1.9} & 0.0 $\pm$ 0.0 \\
AdaRFT & 35.9 $\pm$ 0.7 & 26.8 $\pm$ 1.2 & 0.0 $\pm$ 0.0 & \textbf{3.3 $\pm$ 0.0} \\
\textbf{SPaCe (Ours)} & \textbf{36.4 $\pm$ 0.9} & \textbf{29.3 $\pm$ 1.1} & \textbf{1.1 $\pm$ 1.9} &\textbf{3.3 $\pm$ 0.0} \\
\hline
\end{tabular}
\end{table*}

\begin{table*}[t]
\centering
\caption{Llama3-8B results (accuracy $\pm$ std).}
\label{tab:llama3_8b}
\begin{tabular}{lcccc}
\hline
Method & GSM8K & MATH500 & AIME24 & AIME25 \\
\hline
Base & 34.4 & 26.4 & 0.0 & 3.3 \\
Method A & 50.3 & 30.3 & \textbf{3.3} & 0.0 \\
Method B & 51.0 & 31.1 & \textbf{3.3} & 0.0 \\
Ordered & 50.6 & 32.2 & 0.0 & 0.0 \\
SFT & 50.3 $\pm$ 1.2 & 38.4 $\pm$ 0.9 & \textbf{3.3 $\pm$ 0.0} & 0.0 $\pm$ 0.0 \\
R1 & 51.6 $\pm$ 1.5 & 34.0 $\pm$ 1.6 & 0.0 $\pm$ 0.0 & 0.0 $\pm$ 0.0 \\
AdaRFT & 51.2 $\pm$ 0.8 & 36.1 $\pm$ 1.2 & \textbf{3.3 $\pm$ 0.0} & 3.3 $\pm$ 0.0 \\
\textbf{SPaCe (Ours)} & \textbf{52.2 $\pm$ 1.3} & \textbf{38.8 $\pm$ 0.7} & \textbf{3.3 $\pm$ 0.0} & \textbf{5.5 $\pm$ 3.9} \\
\hline
\end{tabular}
\end{table*}

\subsection{Paired Statistical Analysis} \label{sec:paired_stats}
To verify the impact of SPaCe, we conduct paired statistical tests between SPaCe and baselines, each run with three random seeds. We use matched training seeds for SPaCe and the seed-controlled baselines (SFT, R1, and \textsc{AdaRFT}) to enable a paired comparison. We report paired accuracy improvements (in percentage) of SPaCe over each baseline across multiple settings, using the Student's $t$-distribution with $df{=}2$ \citep{student1908probable}. The comparisons using other LLMs are shown in Table \ref{tab:paired_both_backbones}. The comparisons on Knights and Knaves are shown in Table \ref{tab:kk_paired}. The comparisons using other datasets as training data are shown in Table \ref{tab:paired_deepscaler_splits}.

Table~\ref{tab:paired_both_backbones} reports paired accuracy gains of SPaCe over each baseline across three shared seeds. On \textbf{Qwen3-0.6B}, SPaCe yields \emph{consistently significant} improvements on \textbf{GSM8K} and \textbf{MATH500}, with all 90\% confidence intervals remaining strictly above zero, indicating reliable gains across baselines. On \textbf{DeepSeek-R1}, improvements are also robust on \textbf{MATH500}, while \textbf{GSM8K} gains are positive but less stable due to wider intervals. In contrast, results on \textbf{AIME24} and \textbf{AIME25} exhibit substantially larger variance and several intervals overlap zero, suggesting that while mean gains are often positive, statistical evidence is inconclusive under the three-seed setting.

On Knights and Knaves (Table~\ref{tab:kk_paired}), SPaCe consistently matches or improves upon both R1 and AdaRFT across depths 2--8, with the strongest mean gains concentrated at shallower depths (2--4). While confidence intervals are wider under three seeds, the improvements remain directionally positive throughout and become more stable at higher depths, where results converge as the task becomes more constrained. Overall, these findings suggest that SPaCe provides reliable benefits on this dataset, particularly for lower-depth instances where effective curriculum selection is most impactful.

Table~\ref{tab:paired_deepscaler_splits} shows that SPaCe improves over both R1 and AdaRFT on DeepScaleR-Easy and DeepScaleR-Difficult, with consistently positive mean gains across all benchmarks. On the Easy split, improvements are particularly stable on \textbf{MATH500} (e.g., $+2.13\pm0.00$ over R1 and $+3.40\pm0.78$ over AdaRFT), indicating reliable benefits under matched seeds. Notably, SPaCe continues to provide meaningful gains on the Difficult split, including improvements on \textbf{GSM8K} and \textbf{MATH500} and sizeable gains on \textbf{AIME25} (up to $+6.67\pm4.89$ over R1). While the hardest benchmarks (AIME24/AIME25) exhibit larger variability under three seeds, the overall trend remains positive, suggesting that SPaCe generalizes across both easier and more challenging training regimes.

Finally, Table \ref{tab:paired_other_LLMs_uniform_vs_adarft} shows results using other LLMs as backbones. Across four backbones, SPaCe consistently yields positive paired accuracy improvements over AdaRFT, indicating that its data assignment strategy generalizes well beyond a single base LLM. The gains are most pronounced on \textbf{GSM8K}, where every backbone benefits from a clear uplift, suggesting improved reasoning reliability under shared-seed evaluation. On \textbf{MATH500}, SPaCe remains beneficial and can be notably stable, with \textbf{Qwen3-8B-Base} showing a strong improvement accompanied by a tight confidence interval, consistent with dependable rather than noisy gains. Even on the more challenging \textbf{AIME} benchmarks, SPaCe maintains non-degrading behavior and achieves additional improvements on several backbones, reinforcing its robustness and transferability across diverse model families.

\begin{table*}[t]
\centering
\caption{Paired accuracy improvements (\%) of \textbf{SPaCe} over each baseline, shown for two backbones.
Entries are mean $\pm$ 90\% confidence interval over three shared random seeds (Student's $t$-distribution, $df=2$).}
\label{tab:paired_both_backbones}
\resizebox{\textwidth}{!}{%
\begin{tabular}{l|cccc|cccc}
\toprule
& \multicolumn{4}{c}{\textbf{Qwen3-0.6B}} & \multicolumn{4}{c}{\textbf{DeepSeek-R1}} \\
\cmidrule(lr){2-5}\cmidrule(lr){6-9}
\textbf{Baseline}
& \textbf{GSM8K} & \textbf{MATH500} & \textbf{AIME24} & \textbf{AIME25}
& \textbf{GSM8K} & \textbf{MATH500} & \textbf{AIME24} & \textbf{AIME25} \\
\midrule
SFT
& $+2.18 \pm 1.09$ & $+3.39 \pm 0.99$ & $+10.01 \pm 4.10$ & $+4.44 \pm 9.77$
& $+2.01 \pm 3.32$ & $+1.33 \pm 0.85$ & $+10.00 \pm 9.75$ & $+7.77 \pm 8.59$ \\

R1
& $+1.79 \pm 1.66$ & $+6.56 \pm 1.94$ & $+8.89 \pm 10.12$ & $+6.66 \pm 6.94$
& $+1.73 \pm 1.81$ & $+0.87 \pm 0.50$ & $+6.66 \pm 5.64$ & $+2.22 \pm 8.59$ \\

AdaRFT
& $+0.83 \pm 0.42$ & $+2.26 \pm 2.01$ & $+5.56 \pm 6.03$ & $+4.44 \pm 2.92$
& $+2.54 \pm 1.72$ & $+4.53 \pm 4.12$ & $-1.11 \pm 8.59$ & $+10.00 \pm 20.30$ \\
\bottomrule
\end{tabular}%
}
\end{table*}

\begin{table*}[t]
\centering
\small
\caption{Paired improvements of \textbf{SPaCe} over baselines on the Knights and Knaves dataset using Qwen3-0.6B as the base LLM. Entries are mean $\pm$ 90\% confidence interval over three shared random seeds (Student's $t$-distribution, $df=2$).}
\label{tab:kk_paired}
\resizebox{\textwidth}{!}{%
\begin{tabular}{l|ccccccc}

\toprule
Baseline & 2 & 3 & 4 & 5 & 6 & 7 & 8 \\
\midrule
R1
& $+2.67 \pm 4.25$
& $+4.33 \pm 5.42$
& $+2.33 \pm 2.58$
& $+1.67 \pm 2.58$
& $+1.00 \pm 0.00$
& $+0.00 \pm 0.00$
& $+0.33 \pm 0.97$ \\

AdaRFT
& $+2.67 \pm 0.97$
& $+5.00 \pm 6.08$
& $+3.33 \pm 3.51$
& $+2.00 \pm 0.00$
& $+1.33 \pm 0.97$
& $+0.00 \pm 0.00$
& $+0.33 \pm 0.97$ \\
\bottomrule
\end{tabular}
}
\end{table*}

\begin{table*}[t]
\centering
\small
\caption{Paired accuracy improvements (\%) of \textbf{SPaCe} over baselines on DeepScaleR-Easy and DeepScaleR-Difficult splits using Qwen3-0.6B-Instruct as the base LLM. Entries are mean $\pm$ 90\% confidence interval over three shared random seeds (Student's $t$-distribution, $df=2$).}

\label{tab:paired_deepscaler_splits}
\resizebox{\textwidth}{!}{%
\begin{tabular}{l|cccccccc}
\toprule
& \multicolumn{4}{c}{\textbf{DeepScaleR-Easy}} & \multicolumn{4}{c}{\textbf{DeepScaleR-Difficult}} \\
\cmidrule(lr){2-5}\cmidrule(lr){6-9}
\textbf{Baseline}
& \textbf{GSM8K} & \textbf{MATH500} & \textbf{AIME24} & \textbf{AIME25}
& \textbf{GSM8K} & \textbf{MATH500} & \textbf{AIME24} & \textbf{AIME25} \\
\midrule
R1
& $+1.40 \pm 1.36$ & $+2.13 \pm 0.00$ & $+6.67 \pm 9.77$ & $+5.56 \pm 2.92$
& $+0.60 \pm 0.96$ & $+0.40 \pm 1.63$ & $+5.56 \pm 14.14$ & $+6.67 \pm 4.89$ \\

AdaRFT
& $+1.47 \pm 0.91$ & $+3.40 \pm 0.78$ & $+4.45 \pm 5.78$ & $+2.22 \pm 5.78$
& $+2.23 \pm 1.15$ & $+2.07 \pm 1.46$ & $+2.23 \pm 9.22$ & $+7.78 \pm 15.56$ \\
\bottomrule
\end{tabular}%
}
\end{table*}

\begin{table*}[t]
\centering
\small
\caption{Paired accuracy improvements (\%) of \textbf{SPaCe} over \textbf{AdaRFT} using other base LLMs with DeepScaler-Uniform as the training data. Entries are mean $\pm$ 90\% confidence interval over three shared random seeds (Student's $t$-distribution, $df=2$).}
\label{tab:paired_other_LLMs_uniform_vs_adarft}
\resizebox{\textwidth}{!}{%
\begin{tabular}{l|c|cccc}
\toprule
\textbf{Baseline} & \textbf{Backbone}
& \textbf{GSM8K} & \textbf{MATH500} & \textbf{AIME24} & \textbf{AIME25} \\
\midrule
AdaRFT & Falcon3-1b-Instruct
& $+2.09 \pm 1.53$ & $+0.53 \pm 1.28$ & $+0.00 \pm 0.00$ & $+2.22 \pm 3.24$ \\

AdaRFT & Llama-3.2-1B-Instruct
& $+1.04 \pm 0.41$ & $+0.87 \pm 2.70$ & $+0.00 \pm 5.61$ & $+1.11 \pm 3.24$ \\

AdaRFT & Qwen2.5-0.5B-Instruct
& $+2.40 \pm 3.42$ & $+2.00 \pm 3.36$ & $+2.20 \pm 3.21$ & $+1.10 \pm 3.21$ \\

AdaRFT & Qwen3-8B-Base
& $+2.00 \pm 2.03$ & $+2.47 \pm 0.19$ & $+1.11 \pm 3.24$ & $+0.00 \pm 0.00$ \\

\bottomrule
\end{tabular}%
}
\end{table*}

\subsection{Ablation On Number Of Clusters} \label{sec:number_clusters}
In this section, we ablate on different number of clusters to see its effect on the final performance. Figure~\ref{fig:different_number_clusters} shows how SPaCe’s performance depends on the number of clusters \(K\). Accuracy peaks at a moderate value (\(K\)=7), while very few (\(K\)=1) or many (\(K\)=20) clusters reduce performance. This reflects a trade-off between specialization and generalization: too few clusters collapse diverse examples into coarse groups, while too many fragment the data, leading to sparse sampling and unstable training signals. A moderate clustering level provides the best balance, enabling the bandit to exploit informative variation without over-fragmentation. These results highlight the importance of tuning \(K\), as it directly affects how effectively the curriculum leverages per-example information.

\begin{figure}[t]  
    \centering
    \includegraphics[width=\columnwidth]{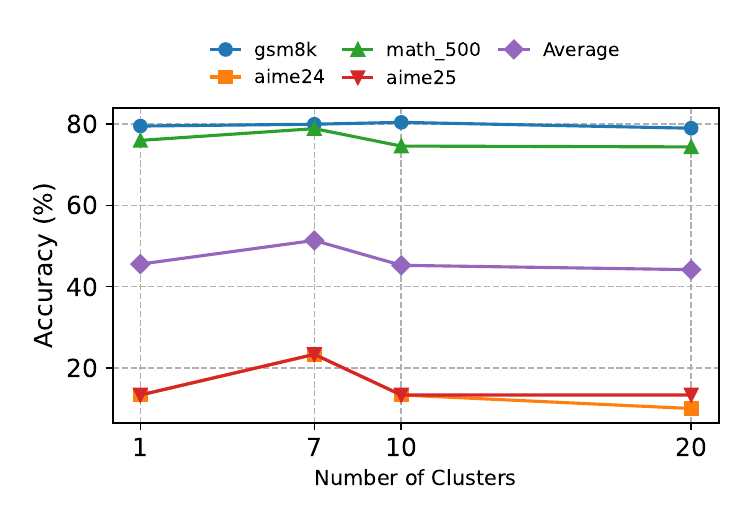}
    \caption{Results for 1 seed with Qwen3-0.6B and different number of clusters.}
    \label{fig:different_number_clusters}
\end{figure}

\subsection{Additional Baseline: Learning Impact Measurement} \label{sec:limr}
In this section, we compare the performance of our method on mathematical reasoning tasks against a variance-based data selection approach, Learning Impact Measurement (LIM) \citep{li2025limrrlscaling}. While the original LIM paper reports results on the MATH-Full dataset \citep{hendrycksmath2021}, we also conduct clustering on the same dataset using our method (SPaCe) to ensure a fair comparison. It is worth noting that after data reduction, LIM retains approximately 1,400 training samples, whereas SPaCe selects only 5 representative clusters, corresponding to just 50 data points, which amounts to \textbf{merely 3\% of the data used by LIM baseline}. Despite this drastic reduction, our method achieves competitive or superior results, demonstrating that SPaCe can reach high efficiency and effectiveness with a fraction of the training data. For these experiments, we perform training on 3 different seeds, and report in Table \ref{tab:limrbresults}.

\subsubsection{LIM definition}
LIM computes a per-sample score from its reward trajectory relative to the model’s average reward curve across epochs. Let \(r_i^{\,k}\) be the reward of sample \(i\) at epoch \(k\) and \(r_k = \tfrac{1}{N}\sum_{i=1}^{N} r_i^{\,k}\) the epoch-wise mean over all \(N\) samples for \(k=1,\dots,K\). The alignment score is
\begin{equation}
s_i \;=\; 1 \;-\;
\frac{\sum_{k=1}^{K}\bigl(r_i^{\,k}-r_k\bigr)^2}
     {\sum_{k=1}^{K}\bigl(1-r_k\bigr)^2},
\qquad i=1,\dots,N,    
\end{equation}

which normalizes the squared deviation of the sample trajectory from the epoch-wise mean. Data reduction is performed by thresholding:
\begin{equation}
\mathcal{D}_{\text{LIM}} \;=\; \{\, i \;:\; s_i > \theta \,\}.    
\end{equation}

\subsubsection{Results}
We show the results on four mathematical reasoning datasets between our method and LIM using MATH as training data in Table~\ref{tab:limrbresults}. We select Qwen2.5-0.5B-Instruct as the base LLM for training. 

The results highlight the effectiveness of SPaCe compared to both the Base model and LIM. On GSM8K, SPaCe reaches 32.0\%, which represents a relative improvement of +22\% over the Base (26.3\%) and still surpasses LIM (30.7\%). On MATH500, SPaCe achieves 20.7\%, outperforming LIM (20.3\%) and the Base (20.0\%), showing that our method yields more stable gains even on challenging competition-level problems. Notably, SPaCe is the only method that improves performance on the Olympiad benchmarks: it attains 1.1\% on AIME24 and 3.3\% on AIME25, while both the Base and LIM fail to make progress on these harder tasks.  

Overall, these findings confirm that SPaCe not only provides consistent improvements on standard benchmarks such as GSM8K and MATH500, but also uniquely enhances generalization to the most challenging settings, where variance-based selection methods like LIM struggle. This demonstrates the robustness and efficiency of our cluster-based approach in leveraging limited training data for stronger downstream reasoning performance.

\begin{table}[t]
\centering
\footnotesize
\setlength{\tabcolsep}{3.5pt}
\renewcommand{\arraystretch}{1.05}
\begin{tabular}{lcccc}
\toprule
Method & GSM8K & MATH500 & AIME24 & AIME25 \\
\midrule
Base & 26.3 & 20.0 & 0.0 & 0.0 \\
LIM & 30.7$_{1.3}$ & 20.3$_{1.1}$ & 0.0$_{0.0}$ & 0.0$_{0.0}$ \\
\rowcolor{gray!15}
SPaCe & \textbf{32.0$_{0.8}$} & \textbf{20.7$_{1.2}$} & \textbf{1.1$_{1.9}$} & \textbf{3.3$_{0.0}$} \\
\bottomrule
\end{tabular}
\caption{Mean$_{std}$ over 3 seeds on MATH benchmarks using Qwen2.5-0.5B-Instruct. Best in \textbf{bold}.}
\label{tab:limrbresults}
\end{table}

\subsection{Effect Of Diverse Sample Selection} \label{sec:diverse_sample_selection}
We show the impact of sample selection strategies for the data reduction on the performance of different LLMs in Figure \ref{fig:select_diverse_examples}. Specifically, we compare our method against two baselines: \textit{random}, which randomly selects training examples for the cluster, and \textit{closest}, which selects the closest examples to the cluster center. As observed, selecting diverse examples with our method consistently yields the highest performance across four datasets. Interestingly, the \textit{closest} strategy performs worse than \textit{random} in most cases. We hypothesize that this is because the examples nearest to the cluster center tend to be overly similar, thus failing to provide sufficient coverage and variation for effective learning.

\begin{figure}[t]
    \centering
    \includegraphics[width=\columnwidth]{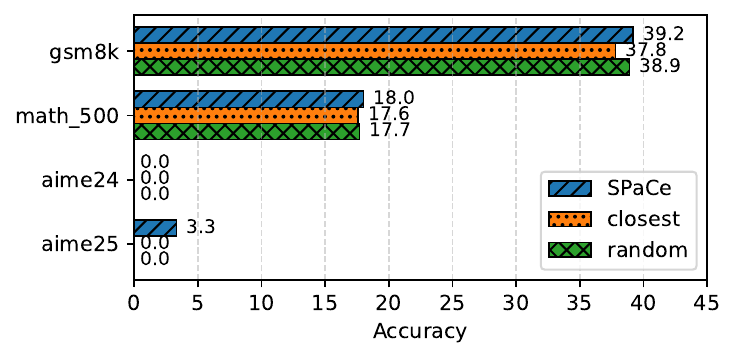}
    \caption{
    Comparison of selection strategies across datasets. Selecting diverse examples with SPaCe outperforms both random and closest baselines. Closest examples perform worse, likely due to reduced variety within each cluster.
    }
    \label{fig:select_diverse_examples}
\end{figure}

\subsection{Impact Of Embedding Model Choice} \label{sec:embedding_model_choice}
To assess SPaCe’s robustness to different semantic embedding backbones, we compare its default sentence encoder with an alternative based on \textit{Qwen2-1.5B-Instruct} (\textit{Alibaba-NLP/gte-Qwen2-1.5B-instruct}). We denote Qwen3\(^1\) as the baseline using \textit{Qwen3-Embedding-0.6B}, which is adopted in SPaCe, and Qwen2.5\(^2\) as the baseline using \textit{Alibaba-NLP/gte-Qwen2-1.5B-instruct}. Table~\ref{tab:different_sentence_embedding_model} reports zero-shot performance on four math reasoning benchmarks. Results show that SPaCe yields nearly identical performance across both embedding models, with the Qwen3-based encoder showing a slight advantage. This plug-and-play flexibility demonstrates that any high-quality pre-trained encoder can be seamlessly integrated into our framework without retraining.

\begin{table*}[h]
\centering

\begin{tabular}{l|c|c|c|c}
\toprule
\textbf{Embedding} 
& \textbf{GSM8K} 
& \textbf{MATH500} 
& \textbf{AIME24} 
& \textbf{AIME25} \\
\midrule

Qwen3$^{1}$ & 32.9$\pm$0.9 & 22.0$\pm$0.7 & 2.2$\pm$1.9 & 1.1$\pm$1.9 \\
Qwen2.5$^{2}$ & 31.6$\pm$0.6 & 21.0$\pm$1.2 & 1.1$\pm$1.9 & 0.0$\pm$0.0 \\
\bottomrule
\end{tabular}
\caption{Results using DeepScaleR as training data for Qwen2.5-0.5B-Instruct, evaluated with two embedding models: Qwen3$^{1}$ and Qwen2.5$^{2}$.}
\label{tab:different_sentence_embedding_model}
\end{table*}

\subsection{Impact Of The Number Of PCA Component} \label{sec:number_pca_component}
In SPaCe, we first apply PCA to reduce the dimensionality of the latent vectors extracted from the pretrained Sentence‑BERT model. By default we use 50 principal components; here, we vary this number to study its effect on final performance. Using Qwen3‑0.6B as the base model and training on the DeepScaleR‑uniform dataset, the results are shown in Table \ref{tab:pca_components}. We observe that smaller to moderate numbers of components (10 or 50) yield the best performance, whereas larger values (100 or 300) lead to a decline. We hypothesize that very high‑dimensional embeddings overwhelm the difficulty signal prior to clustering, resulting in poorer downstream performance.

\begin{table*}[h]
\centering
\begin{tabular}{l|c|c|c|c|c}
\toprule
\textbf{Number of PCA Components} 
& \textbf{GSM8K} 
& \textbf{MATH500} 
& \textbf{AIME24} 
& \textbf{AIME25} 
& \textbf{Average}\\
\midrule
10 & 79.2& 74.4& 13.3 & 20.0 & 46.7\\
50 & 80.0 & 78.9 & 23.3 & 13.3 & 48.9\\
100 & 79.8& 76.6 & 6.7& 13.3 & 44.1\\
300 & 79.0& 75.8& 16.7&10.0 & 45.4\\
\bottomrule
\end{tabular}
\caption{Results on 1 same seed on the DeepScaleR-uniform dataset using Qwen3-0.6B as the base model with different number of PCA components.}
\label{tab:pca_components}
\end{table*}

\subsection{Number Of Samples In Each Cluster} \label{sec:number_samples_each}
We vary the number of samples per cluster \(l\) to assess its impact, and present the results in Figure~\ref{fig:different_examples_per_cluster}. As shown, performance peaks at our default setting of \( l = 10 \). In contrast, using \( l = 1 \) yields lower performance, likely because the selected examples lack sufficient diversity to provide a strong learning signal. Larger values of \( l \in [100, 300] \) also lead to degraded performance, which we attribute to increased randomness when sampling too many examples per arm, especially when \( l \gg B \). Due to compute limits, we couldn not extensively tune $l$ between 10–100, where better results might be possible. Overall, setting \( l \sim B \) achieves the most reasonable results.

\begin{figure}[h]
  \centering
  \includegraphics[width=\columnwidth]{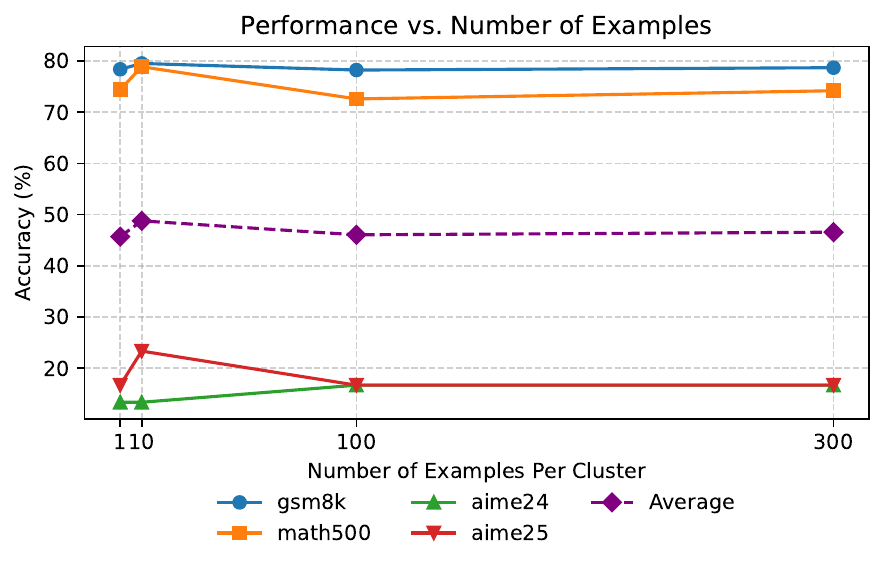}
  \caption{Results for 1 seed with Qwen3-0.6B and different number of samples per cluster.}
  \label{fig:different_examples_per_cluster}
\end{figure}

\subsection{Effect of Removing Difficulty in Clustering on Performance} \label{sec:remove_difficulty}
To isolate the contribution of the per-example difficulty attribute to cluster quality, the clustering pipeline is repeated after ablating this signal (i.e., omitting the difficulty attribute). Results are reported in Figure~\ref{fig:remove_difficulty}, which shows downstream performance of \textit{Qwen3-0.6B} on each dataset under this “no-difficulty” condition. Across benchmarks, performances consistently drop when the difficulty attribute is excluded, indicating that this signal is important for forming meaningful clusters and improving curriculum selection.

\begin{figure}[t]
    \centering
    \includegraphics[width=\columnwidth]{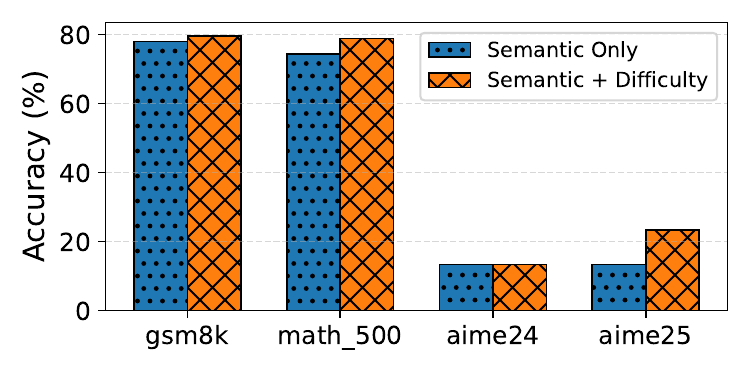}
    \caption{Effect of Removing Difficulty in Clustering on Performance.}
    \label{fig:remove_difficulty}
\end{figure}

\subsection{Additional Related Work: Reinforcement Fine-Tuning for Language Models}
Due to page limit in the main paper, we include the additional related work in the Reinforcement Fine-Tuning methods for Language Models here.

Language Models can be formulated as sequential decision-making agents, enabling the application of RL techniques for fine-tuning. Proximal Policy Optimization (PPO) \citep{schulman2017proximalpolicyoptimizationalgorithms} has been widely adopted in early RLHF pipelines due to its balance between stability and sample efficiency. More recent work introduced actor-only alternatives such as REINFORCE++ \citep{hu2025reinforce++} and Group Relative Policy Optimization (GRPO) \citep{deepseekai2025deepseekr1incentivizingreasoningcapability}, which eliminate the need for value networks and have shown strong performance on large language models, particularly in reasoning tasks. By avoiding a separately trained critic, these approaches simplify optimization, reduce variance in policy updates, and mitigate instability caused by poorly estimated value functions.
GRPO, in particular, has been successfully deployed in large-scale instruction tuning setups where explicit reward modeling is either impractical or misaligned with target behaviors. Instead of depending on hand-crafted reward models, GRPO leverages group-based relative comparisons across sampled trajectories, thereby aligning the optimization signal with preference-style supervision. This actor-only paradigm aligns naturally with recent trends in LLM alignment, where scalability, reduced computational overhead, and robustness to noisy feedback are critical. These methods represent a shift from critic-dependent RLHF pipelines toward lightweight, actor-centric algorithms that better match the scale and complexity of modern LLM training regimes.

\subsection{Selected Training Samples Difficulty} \label{sec:selected_training_samples}
We analyze the selected training examples to understand how SPaCe constructs its curriculum. Figure~\ref{fig:difficulty_distribution} shows the difficulty distribution of questions selected on the DeepScaleR-uniform dataset. SPaCe consistently chooses examples across the full difficulty range—from easy (near 0) to hard (near 100)—ensuring balanced coverage. This diversity enables training on a broad range of problems, avoiding overfitting to simple or complex cases. Notably, this balance emerges without manual difficulty constraints, highlighting the effectiveness of SPaCe’s clustering and selection strategy.

\begin{figure*}[h]
    \centering
    \includegraphics[width=\linewidth]{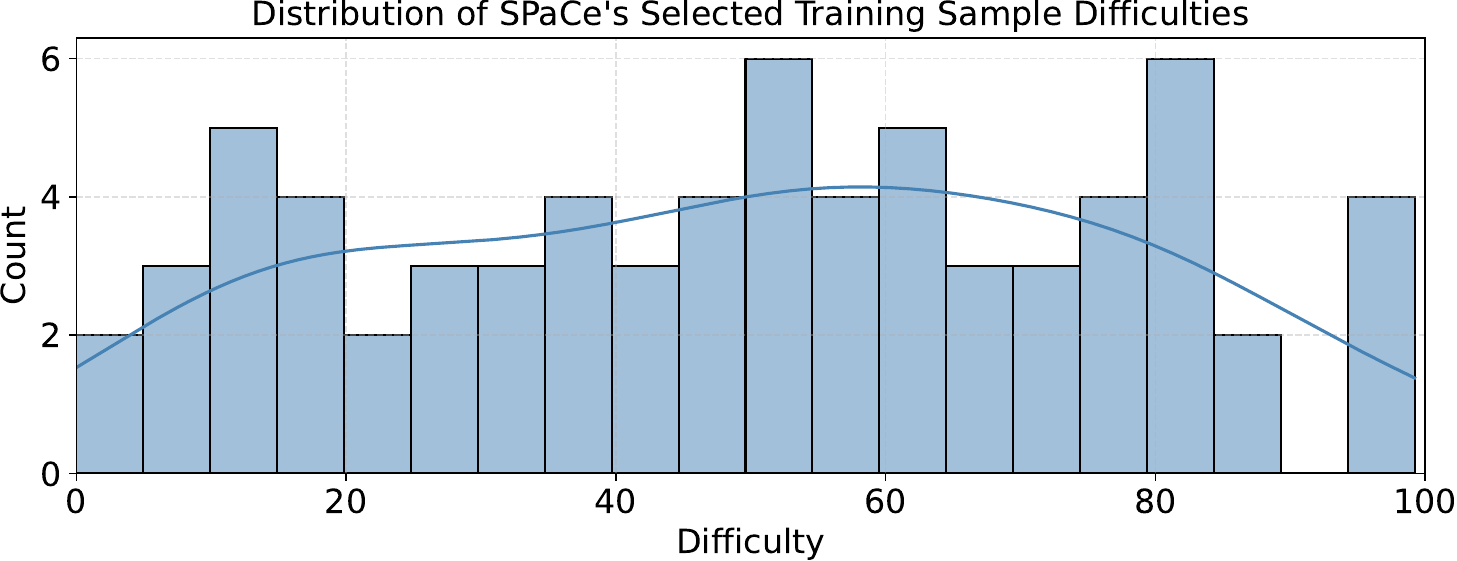}
    \caption{
    Difficulty distribution of training examples in SPaCe.
    }
    \label{fig:difficulty_distribution}
\end{figure*}

\subsection{Deepscaler Subset Distributions} \label{sec:datset_dist}
We provide the difficulty score distributions of three DeepScaleR subsets: DeepScaleR Uniform, DeepScaleR Easy, and DeepScaleR Difficult, as shown in Figure~\ref{fig:difficulty_distribution_subset}. Each subset exhibits distinct difficulty characteristics, reflecting the varying levels of challenge present in the data. The distributions are grouped into bins of size 10, allowing for a clear comparison of how problem difficulty varies across these subsets. In particular, the Uniform subset spans the entire difficulty range with roughly balanced coverage, making it suitable for general-purpose training and evaluation. By contrast, the Easy subset is concentrated heavily in the lower-difficulty bins, highlighting its role in providing simpler problems for warm-up training or curriculum learning. Meanwhile, the Difficult subset skews strongly toward the higher-difficulty bins, offering more challenging samples that are valuable for stress-testing reasoning capabilities and benchmarking advanced methods. Together, these subsets provide complementary perspectives on model performance across a wide spectrum of difficulty levels, ensuring a more comprehensive assessment of reasoning ability.

\begin{figure*}[h]
    \centering
    \includegraphics[width=\linewidth]{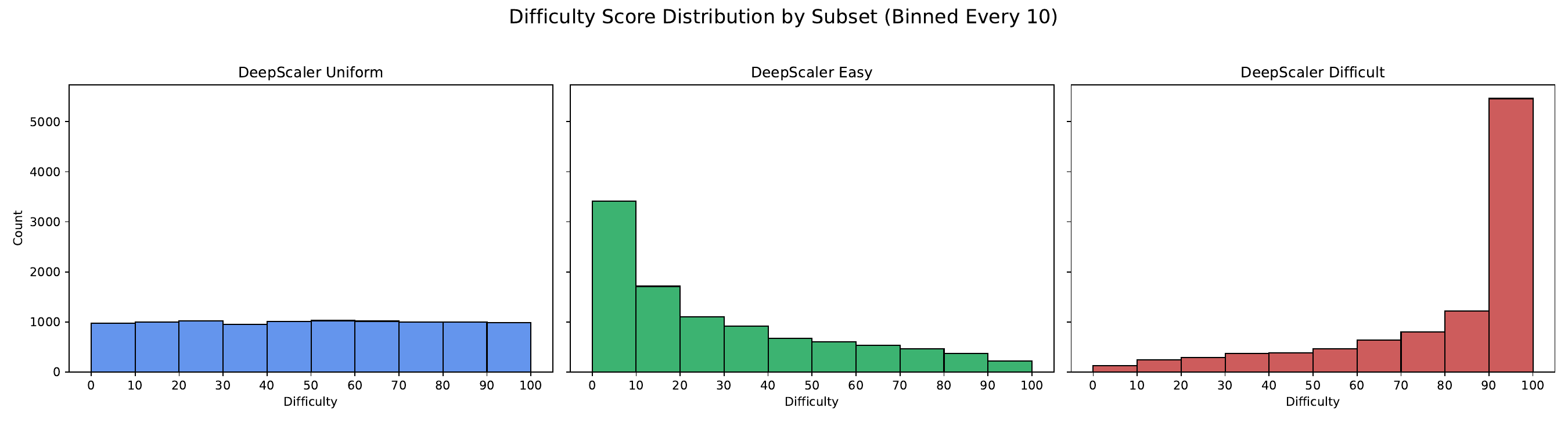}
    \caption{DeepScaleR subsets' difficulty distributions.}
    \label{fig:difficulty_distribution_subset}
\end{figure*}

\subsection{Cluter Analysis} \label{sec:cluster_analysis}
In this section, to provide further insight into what happens during the clustering phase of our framework, we analyze several representative settings to examine both what is captured for training and how clustering shapes the overall behavior of our approach. Specifically, we investigate how problem embeddings are grouped and how these clusters align with meaningful curricular categories. For this purpose, we employ GPT-5’s API \citep{openai2025} to categorize each problem into one of the seven canonical subject areas defined by \citet{hendrycksmath2021}, namely: Prealgebra, Algebra, Number Theory, Counting \& Probability, Geometry, Intermediate Algebra, and Precalculus. This taxonomy is consistent with the Art of Problem Solving (AoPS) curriculum \citep{aops_wiki}, which provides a widely accepted structure for organizing mathematical problem-solving skills.  

\begin{itemize}
  \item \textbf{Prealgebra}: Covers arithmetic foundations, including fractions, decimals, percents, ratios, proportions, and basic number properties. It also introduces simple equations and word problems.

  \item \textbf{Algebra}: Focuses on symbolic manipulation and equations, such as linear and quadratic equations, inequalities, systems of equations, factoring, functions, and exponents. It marks the transition from arithmetic to general algebraic reasoning.

  \item \textbf{Number Theory}: Includes topics such as divisibility, prime numbers, greatest common divisors, modular arithmetic, congruences, and Diophantine equations. Problems emphasize reasoning about integer structure and properties.

  \item \textbf{Counting \& Probability}: Encompasses combinatorics and elementary probability, including permutations, combinations, casework, binomial coefficients, expected value, and probabilistic reasoning.

  \item \textbf{Geometry}: Centers on Euclidean geometry of lines, angles, triangles, quadrilaterals, circles, and polygons. Topics include similarity, congruence, area, volume, coordinate geometry, and introductory trigonometric methods.

  \item \textbf{Intermediate Algebra}: Extends algebra with higher-level topics such as polynomials, rational functions, complex numbers, inequalities, logarithmic and exponential functions, and sequences/series.

  \item \textbf{Precalculus}: Prepares for calculus through trigonometry, advanced functions, polar/parametric representations, vectors, and deeper study of sequences and series.
\end{itemize}

\subsubsection{DeepScaleR Dataset}
We analyze the \textit{DeepScaleR} dataset under the configuration with $K{=}7$ clusters induced by \texttt{Qwen3-0.6B-Embeddings}, consistent with Table~\ref{tab:number_of_clusters}. Figure~\ref{fig:difficulty_distribution_subset} visualizes one representative run. The resulting partition exhibits intuitive curricular structure: a majority of items fall into \textit{Prealgebra} (51.4\%), with \textit{Geometry} (18.6\%) and \textit{Algebra} (12.9\%) also comprising substantial shares; by contrast, smaller categories such as \textit{Number Theory} (1.4\%) and \textit{Intermediate Algebra} (1.4\%) are comparatively scarce, with the remaining mass distributed across \textit{Counting \& Probability} and \textit{Precalculus}. 

Beyond mirroring topical prevalence in the underlying corpus, this distribution suggests that the embedding-driven clustering is aligned with both latent difficulty and high-level curricular distinctions. Practically, this yields two benefits for downstream training: (i) it avoids over-emphasizing any single subject area, providing balanced exposure across topics; and (ii) it simplifies scheduling, since strata can be sampled in a principled way (e.g., uniformly or by difficulty-aware policies) without ad-hoc reweighting to correct for cluster idiosyncrasies. In short, even though clusters are formed in representation space, they preserve pedagogically meaningful boundaries that support stable and fair curriculum design.

\begin{figure*}[t]
    \centering
    \includegraphics[width=\linewidth]{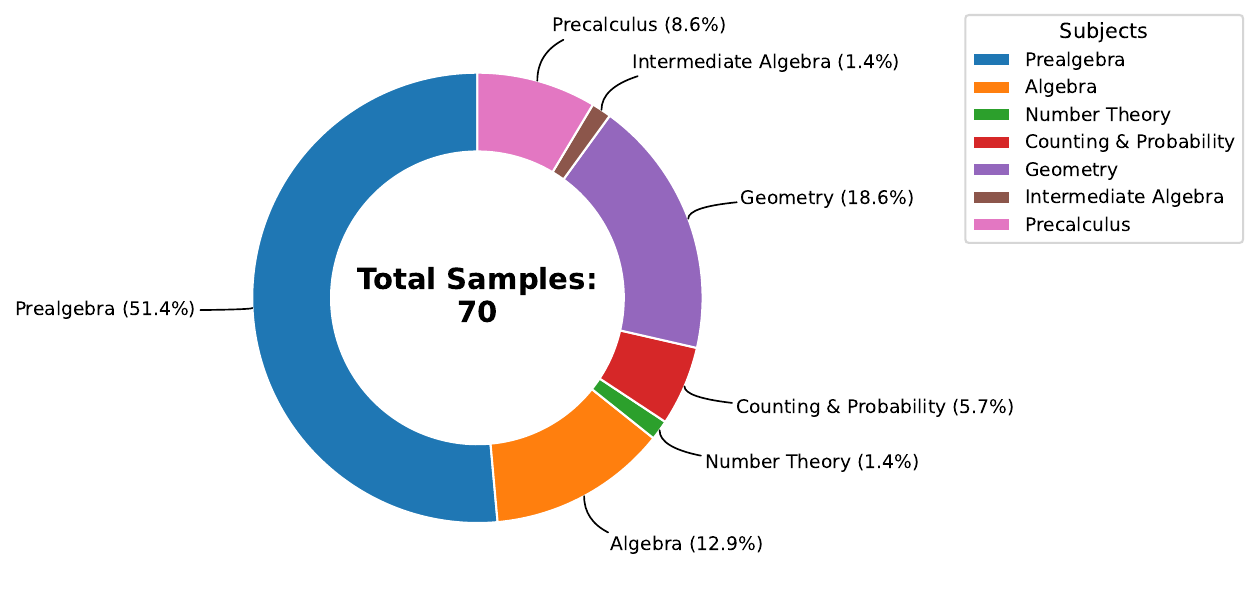}
    \caption{Selected data subjects (70 samples) using Qwen3-0.6B-Embedding. The data is clustered from DeepScaleR-uniform set.}
    \label{fig:math_diff}
\end{figure*}

\begin{figure*}[t]
    \centering
    \includegraphics[width=\linewidth]{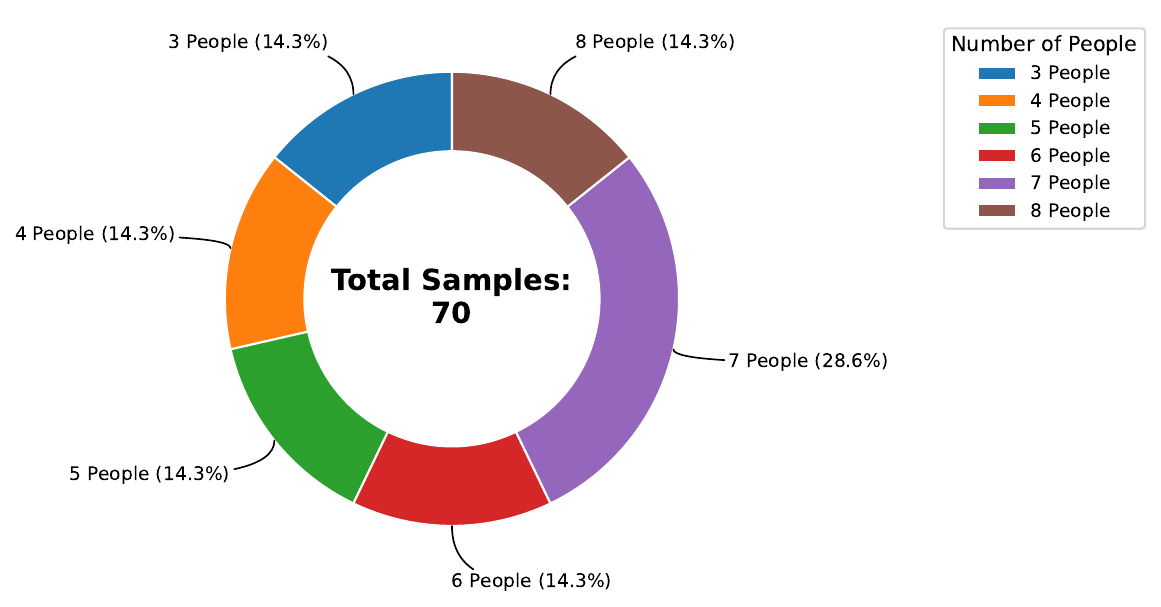}
    \caption{Selected data subjects (70 samples) using Qwen3-0.6B-Embedding. The data is clustered from Knights and Knaves set.}
    \label{fig:kk_diff}
\end{figure*}

\subsubsection{Knights-and-Knaves Dataset}
We conduct an analogous analysis on the \textit{Knights-and-Knaves} dataset using $K{=}7$ clusters obtained with \texttt{Qwen3-0.6B-Embeddings}. The results are shown in Figure~\ref{fig:kk_diff}. Plotting the distribution of selected questions by the number of people per instance reveals two robust regularities across all three training seeds: (i) no 2-person questions are selected; and (ii) aside from the 7-person category, all remaining categories contain the same number of questions. The same symmetry appears in the other seeds, indicating that the clustering process is not only semantically coherent but also structurally consistent with a salient, coarse-grained attribute (the number of entities in the prompt).

These regularities have useful practical consequences. First, they provide balanced coverage over interaction sizes, preventing the curriculum from drifting toward a single conversational complexity. Second, they reduce confounds in subsequent evaluation and scheduling: because most categories are equalized, one can adopt simple, uniform sampling or layer a performance-aware scheduler on top without introducing artifacts from cluster imbalance. We reckon the persistent absence of 2-person items likely reflects a combination of dataset composition and our selection protocol’s preference for more discriminative examples.

\subsection{Training Time} \label{sec:trainingtime} 
To evaluate computational efficiency, we measure wall-clock training time when applying AdaRFT and our proposed SPaCe across five representative base models: \textit{Qwen3-0.6B}, \textit{Falcon}, \textit{Llama3}, \textit{Qwen2.5}, and \textit{Qwen3-8B}. These models span a range of parameter scales and architectures, providing a balanced testbed for assessing runtime behavior under different backbone choices. All runs are conducted under identical hardware and data conditions to ensure a fair comparison. We also note that both methods require precomputed hardness scores; we therefore follow the same hardness-estimation protocol as AdaRFT \citep{AdaRFT} for all methods and report only the training time here.

Figure~\ref{fig:trainingtime} reports the results. Across all models, SPaCe consistently reduces wall-clock training time relative to AdaRFT, with per-model savings ranging from $2.1\%$ to $10.8\%$ (e.g., Qwen2.5: $-66$ minutes, $-10.8\%$; Qwen3\texttt{-}8B: $-85$ minutes, $-5.6\%$; Llama3: $-40$ minutes, $-7.4\%$; Qwen3\texttt{-}0.6B: $-23$ minutes, $-2.4\%$; Falcon: $-14$ minutes, $-2.1\%$).

We attribute these savings to SPaCe’s clustering phase, which dynamically prioritizes clusters that yield higher learning signal over a reduced selection space, thereby avoiding wasted updates on redundant or low-yield samples. Although the absolute magnitude of savings depends on the underlying backbone, the improvements are consistent across diverse architectures, highlighting that SPaCe not only improves data efficiency but also offers a practical reduction in training cost without additional engineering or inference-time overhead.

\begin{figure*}[h]
    \centering
    \includegraphics[width=\textwidth]{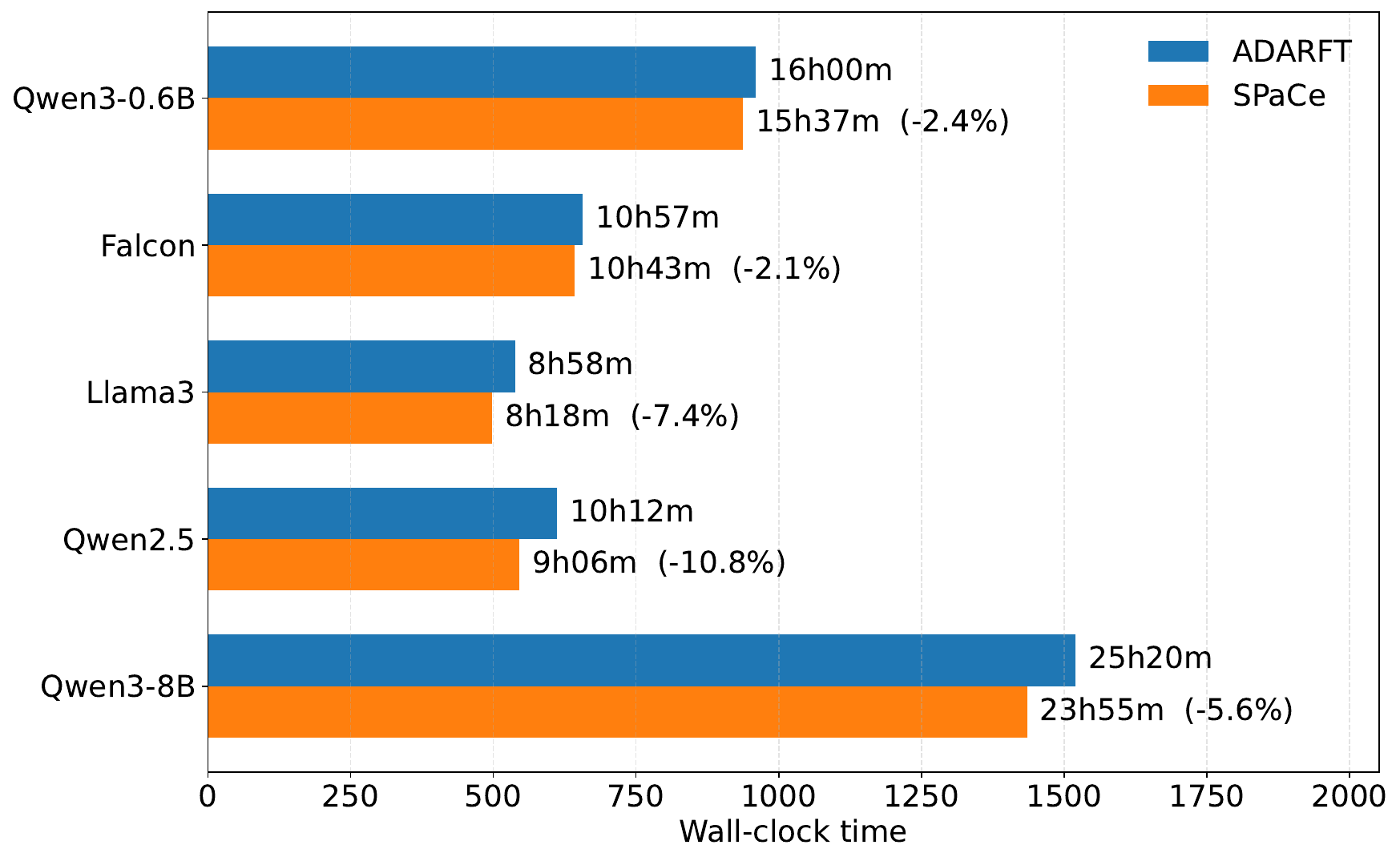}
    \caption{Training time comparison between AdaRFT and SPaCe.}
    \label{fig:trainingtime}
\end{figure*}

\subsection{SPaCe training details} \label{appendix:SPaCe_details}

\subsubsection{Training Hyperparameters}
\paragraph{General Training Parameters}
In this section, we provide the training details of SPaCe in Table \ref{tab:SPaCe_parameters}. These parameters apply for full training LLMs (which excludes the training of \textit{Qwen3-8B-Base}).

\begin{table*}[h]
\centering
\begin{tabular}{l|c}
\toprule
\textbf{Parameters} & \textbf{Value} \\
\midrule
 Number of consecutive steps for penalizing (\(T_{consecutive}\)) & 10 \\
 Number of examples per cluster (\(l\)) & 10\\
 Number of PCA components & 50\\
 Batch size (B) & 8\\
 Number of generation per step (G) & 8\\
 Maximum completion length (L) & 1200\\
 Initial learning rate (\(\alpha\)) & $5e^{-6}$\\
 Weight Decay& 0.1\\
 Warmup Ratio & 0.1 \\
 lr\_scheduler\_type & cosine \\
 Adam \(\beta_1\) & 0.9\\
 Adam \(\beta_2\) & 0.99 \\
 bf16 & True\\
 Per device train batch size & 8\\
 Gradient accumulation steps & 8\\
 Max grad norm (\(G_{norm}\)) & 0.1 \\
 $\epsilon$ & 1e\(^{-6}\)\\
\bottomrule
\end{tabular}
\caption{Parameters used in SPaCe.}
\label{tab:SPaCe_parameters}
\end{table*}

\paragraph{LoRA Training Parameters}
In this section, we provide the LoRA training parameters for \textit{Qwen3-8B-Base}. All the parameters used are reported in Table \ref{tab:lora_parameters}.

\begin{table*}[h]
\centering
\begin{tabular}{l|c}
\toprule
\textbf{Parameters} & \textbf{Value} \\
\midrule
 All Parameters & 8,194,569,216\\
 Trainable Parameters (\(l\)) & 3,833,856\\
 Trainable \% & 0.05\\
 Rank & 8 \\
 LoRA $\alpha$ & 16 \\
 Target Modules & \text{["q\_proj", "v\_proj"]}\\
 LoRA Dropout & 0.1\\
 Bias & None \\
\bottomrule
\end{tabular}
\caption{Parameters used for Low-rank Adaptation (LoRA) Fine-tuning.}
\label{tab:lora_parameters}
\end{table*}

\subsubsection{Number of clusters}
We provide the number of clusters used for different settings of SPaCe in Table~\ref{tab:number_of_clusters}. While the optimal number of clusters varies across datasets, it remains within a moderate range, consistent with our observations in Section~\ref{sec:clustering_effects}.

\begin{figure}[h]
    \centering
    \includegraphics[width=\columnwidth]{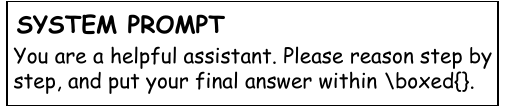}
    \caption{System prompt used in our experiments.}
    \label{fig:system_prompt}
\end{figure}

\begin{table*}[h]
\centering
\begin{tabular}{l|c|c}
\toprule
\textbf{Model} 
& \textbf{Train dataset} 
& \textbf{Number of clusters} 
\\
\midrule
\multirow{4}{*}{Qwen3-0.6B} 
  & DeepScaleR-uniform   &  7\\ 
  & DeepScaleR-easy      &  8\\
  & DeepScaleR-difficult &  10\\
  & GSM8K                &  10\\
\midrule
\multirow{1}{*}{DeepSeek-R1-Distill-Qwen-1.5B} 
  & DeepScaleR-uniform   &  7\\ 
\midrule
\multirow{1}{*}{Falcon3-1B-Instruct} 
  & DeepScaleR-uniform   &  6\\ 
\midrule
\multirow{1}{*}{Llama-3.2-1B-Instruct} 
  & DeepScaleR-uniform   &  8\\ 
\midrule
\multirow{1}{*}{Qwen2.5-0.5B-Instruct} 
  & DeepScaleR-uniform   &  7\\ 
\midrule
\multirow{1}{*}{Qwen3-8B-Base} 
  & DeepScaleR-uniform   &  7\\ 
\bottomrule
\end{tabular}
\caption{Number of clusters used for different settings of our method.}
\label{tab:number_of_clusters}
\end{table*}

\subsubsection{Model and data references}
We list the links to the LLM models and datasets in Table \ref{tab:item_urls}. 

\begin{table*}[t]
\centering

\begin{tabular}{l|l}
\hline
\textbf{Models/Datasets} & \textbf{URL} \\
\hline
Qwen3-Embedding-0.6B & \url{https://huggingface.co/Qwen/Qwen3-Embedding-0.6B} \\
Qwen2.5-0.5B-Instruct & \url{https://huggingface.co/Qwen/Qwen2.5-0.5B-Instruct}\\
Llama3.2-1B-Instruct & \url{https://huggingface.co/meta-llama/Llama-3.2-1B-Instruct}\\
Falcon3-1B-Instruct & \url{https://huggingface.co/tiiuae/Falcon3-1B-Instruct} \\
Alibaba-NLP/& \\
gte-Qwen2-1.5B-instruct & \url{https://huggingface.co/Alibaba-NLP/gte-Qwen2-1.5B-instruct} \\
DeepScaleR & \url{https://huggingface.co/datasets/agentica-org/DeepScaleR-Preview-Dataset} \\
GSM8K & \url{https://huggingface.co/datasets/openai/gsm8k} \\
MATH-500 & \url{https://huggingface.co/datasets/HuggingFaceH4/MATH-500} \\
AIME24& \url{https://huggingface.co/datasets/math-ai/aime24} \\
AIME25& \url{https://huggingface.co/datasets/math-ai/aime25} \\
\hline
\end{tabular}
\caption{Models and Datasets Details.}
\label{tab:item_urls}
\end{table*}

\subsubsection{System Prompt}
Following \citep{openr1}, the system prompt asks the model to generate the answer with clear requirements, with reasoning and answer following the format, as described in Figure \ref{fig:system_prompt}.

\subsection{Response Examples}
We present several sampled responses of SPaCe in Table \ref{tab:SPaCe_ex1} and \ref{tab:SPaCe_ex2}.

\begin{table*}[t]
\centering
\caption{Qwen3-0.6B fine-tuned with SPaCe responses.}
\begin{tabular}{|p{0.95\textwidth}|}
\hline
\textbf{Question} \\
Let $f$ be the function defined by $f(x)=ax^2-\sqrt{2}$ for some positive $a$. If $f(f(\sqrt{2}))=-\sqrt{2}$ then $a=$. Please reason step by step, and put your final answer within \texttt{\textbackslash boxed\{\}}. \\
\textbf{Answer} \\
To solve for $a$, we need to find the value of $a$ such that $f(f(\sqrt{2})) = -\sqrt{2}$, where $f(x) = ax^2 - \sqrt{2}$.                                                      
First, let's compute $f(\sqrt{2})$:                                                                                                                   

$$                                                                                                                                                    
f(\sqrt{2}) = a(\sqrt{2})^2 - \sqrt{2} = a(2) - \sqrt{2} = 2a - \sqrt{2}                                                                              
$$                                                                                                                                                                                                                                                                                                                                     
Now, we need to compute $f(f(\sqrt{2}))$:                                                                                                                                        
$$                                                                                                                                                                  
f(f(\sqrt{2})) = f(2a - \sqrt{2}) = a(2a - \sqrt{2})^2 - \sqrt{2}                                                                                                   
$$                                                                                                                                                                       
Let's expand the square term:

$$           
(2a - \sqrt{2})^2 = (2a)^2 - 2(2a)(\sqrt{2}) + (\sqrt{2})^2 = 4a^2 - 4a\sqrt{2} + 2
$$                                                                                                                                                                                                                                                                                                                                        
\[\dots\]

So the discriminant is zero, which means there is a repeated root:                                                                

$$                                                                                                                                
a = \frac{2\sqrt{2}}{4} = \frac{\sqrt{2}}{2}                                                                                      
$$                                                                                                                                

Therefore, the value of $a$ is $\frac{\sqrt{2}}{2}$.                                                                              

Let me double-check the calculations to ensure there are no errors. Starting from the beginning:                                  

1. $f(\sqrt{2}) = a(2) - \sqrt{2} = 2a - \sqrt{2}$                                                                                

2. Then $f(f(\sqrt{2})) = f(2a - \sqrt{2}) = a(2a - \sqrt{2})^2 - \sqrt{2}$                                                       

3. Expanding $(2a - \sqrt{2})^2$ gives $4a^2 - 4a\sqrt{2} + 2$                                                                    

4. Multiplying by $a$ and subtracting $\sqrt{2}$ gives $4a^3 - 4a^2\sqrt{2} + 2a - \sqrt{2}$                                      

5. Setting equal to $-\sqrt{2}$ leads to the equation $4a^3 - 4a^2\sqrt{2} + 2a = 0$                                              

6. Factoring gives $2a(2a^2 - 2a\sqrt{2} + 1) = 0$, leading to the quadratic with discriminant zero.                              

Yes, the steps seem correct. The discriminant being zero implies a repeated root, which gives $a = \frac{\sqrt{2}}{2}$. Therefore, the answer is $\boxed{\frac{\sqrt{2}}{2}}$.                

**Final Answer**                                                                                                                  
$$                                                                                                                                
\boxed{\frac{\sqrt{2}}{2}}                                                                                                        
$$                                                                                                                                
\\
\hline
\end{tabular}
\label{tab:SPaCe_ex1}
\end{table*}

\begin{table*}[t]
\centering
\caption{Qwen3-0.6B fine-tuned with SPaCe responses.}
\begin{tabular}{|p{0.95\textwidth}|}
\hline
\textbf{Question} \\

A set $S$ of points in the $xy$-plane is symmetric about the origin, both coordinate axes, and the line $y=x$. If $(2,3)$ is in $S$, what is the smallest number of points in $S$?. Please reason step by step, and put your final answer within \texttt{\textbackslash boxed\{\}}.\\

\textbf{Answer} \\
To solve this problem, we need to determine the smallest number of points in the set $ S $, given that the set is symmetric about the origin, both coordinate axes, and the line $ y = x $. Additionally, the point $ (2, 3) $ is in $ S $.    
 
---                                

\#\#\# Step 1: Understand the Symmetry Conditions                                                                                                                                                                                                  

A set of points that is symmetric about the **origin** means that for every point $ (x, y) $ in $ S $, $ (-x, -y) $ is also in $ S $.

Symmetry about the **coordinate axes** means that if $ (x, y) $ is in $ S $, then $ (y, x) $ is also in $ S $.

Symmetry about the **line $ y = x $** means that if $ (x, y) $ is in $ S $, then $ (y, x) $ is also in $ S $. 

\[\dots\]

Is there a point that we might be missing?

Let's check if $ (3, 2) $ is symmetric to itself. Yes, it is.

So, with these three points, we have covered all the symmetries.

But wait! What about the point $ (2, 3) $? Is there a point symmetric to it that we haven't considered?

No, because we already considered all the points under the given symmetries.

So, the minimal number of points in $ S $ is 3.

---

\#\#\# Final Answer:

$$\boxed{3}$$\\
\hline

\end{tabular}
\label{tab:SPaCe_ex2}
\end{table*}

\subsection{Selected Example Analysis}
In this section, we provide details on the selected examples by SPaCe. For each cluster, we show 2 selected examples selected by our method using \textit{Qwen3-0.6B}, with the DeepScaleR-uniform dataset. We select the examples from the setting with total number of clusters equal 7. The examples are shown from Table \ref{tab:ex1} to Table \ref{tab:last_ex}.

\begin{table*}[t]
\centering
\caption{Cluster 0 examples. Only 2 examples are shown for illustration.}
\begin{tabular}{|p{0.95\textwidth}|}
\hline
\textbf{Example 1} \\
\textbf{Prompt} \\
(In the preliminaries of optimal method and experimental design) When using the 0.618 method to find the optimal amount to add in an experiment, if the current range of excellence is $[628, 774]$ and the good point is 718, then the value of the addition point for the current experiment is \_\_\_\_\_\_. Please reason step by step, and put your final answer within \texttt{\textbackslash boxed\{\}}. \\
\textbf{Answer} \\
684 \\
\hline
\textbf{Example 2} \\
\textbf{Prompt} \\
Calculate the probability that in a deck of 52 cards, the second card has a different suit than the first, and the third card has a different suit than the first and second. Please reason step by step, and put your final answer within \texttt{\textbackslash boxed\{\}}. \\
\textbf{Answer} \\
$\frac{169}{425}$ \\
\hline
\end{tabular}
\label{tab:ex1}

\end{table*}

\begin{table*}[t]
\centering
\caption{Cluster 1 examples. Only 2 examples are shown for illustration.}
\begin{tabular}{|p{0.95\textwidth}|}
\hline
\textbf{Example 1} \\
\textbf{Prompt} \\
Calculate: $\frac{{\cos190^{\circ}(1+\sqrt{3}\tan10^{\circ})}}{{\sin290^{\circ}\sqrt{1-\cos40^{\circ}}}} = \_\_\_\_\_$. Please reason step by step, and put your final answer within \texttt{\textbackslash boxed\{\}}. \\
\textbf{Answer} \\
$2\sqrt{2}$ \\
\hline
\textbf{Example 2} \\
\textbf{Prompt} \\
Let $a_n$ be the number of $n$-digit numbers formed using only digits 1 and 2 such that no two adjacent digits are both 2. Find $a_5$. Please reason step by step, and put your final answer within \texttt{\textbackslash boxed\{\}}. \\
\textbf{Answer} \\
$13$ \\
\hline
\end{tabular}
\end{table*}

\begin{table*}[t]
\centering
\caption{Cluster 2 examples. Only 2 examples are shown for illustration.}
\begin{tabular}{|p{0.95\textwidth}|}
\hline
\textbf{Example 1} \\
\textbf{Prompt} \\
You have 5 red balls and 5 blue balls in a box. You randomly draw 4 balls without replacement. What is the probability that exactly 2 red balls are drawn? Please reason step by step, and put your final answer within \texttt{\textbackslash boxed\{\}}. \\
\textbf{Answer} \\
$\frac{25}{63}$ \\
\hline
\textbf{Example 2} \\
\textbf{Prompt} \\
If a and b are real numbers such that $a^2 + b^2 = 1$, what is the maximum value of $ab$? Please reason step by step, and put your final answer within \texttt{\textbackslash boxed\{\}}. \\
\textbf{Answer} \\
$\frac{1}{2}$ \\
\hline
\end{tabular}
\end{table*}

\begin{table*}[t]
\centering
\caption{Cluster 3 examples. Only 2 examples are shown for illustration.}
\begin{tabular}{|p{0.95\textwidth}|}
\hline
\textbf{Example 1} \\
\textbf{Prompt} \\
Solve for $x$: $\log_3(x^2 - 1) = 2$. Please reason step by step, and put your final answer within \texttt{\textbackslash boxed\{\}}. \\
\textbf{Answer} \\
$4$ \\
\hline
\textbf{Example 2} \\
\textbf{Prompt} \\
Evaluate the integral $\int_0^1 x e^x \, dx$. Please reason step by step, and put your final answer within \texttt{\textbackslash boxed\{\}}. \\
\textbf{Answer} \\
$e - 2$ \\
\hline
\end{tabular}
\end{table*}

\begin{table*}[t]
\centering
\caption{Cluster 4 examples. Only 2 examples are shown for illustration.}
\begin{tabular}{|p{0.95\textwidth}|}
\hline
\textbf{Example 1} \\
\textbf{Prompt} \\
If $\sin x + \cos x = \sqrt{2}$, find the value of $\sin^4 x + \cos^4 x$. Please reason step by step, and put your final answer within \texttt{\textbackslash boxed\{\}}. \\
\textbf{Answer} \\
$\frac{3}{4}$ \\
\hline
\textbf{Example 2} \\
\textbf{Prompt} \\
Find the sum of the series $\sum_{n=1}^{\infty} \frac{1}{n(n+1)}$. Please reason step by step, and put your final answer within \texttt{\textbackslash boxed\{\}}. \\
\textbf{Answer} \\
$1$ \\
\hline
\end{tabular}
\end{table*}

\begin{table*}[t]
\centering
\caption{Cluster 5 examples. Only 2 examples are shown for illustration.}
\begin{tabular}{|p{0.95\textwidth}|}
\hline
\textbf{Example 1} \\
\textbf{Prompt} \\
How many 4-digit numbers are there such that no two adjacent digits are the same? Please reason step by step, and put your final answer within \texttt{\textbackslash boxed\{\}}. \\
\textbf{Answer} \\
$5832$ \\
\hline
\textbf{Example 2} \\
\textbf{Prompt} \\
If $A = \{1,2,3,4\}$ and $B = \{3,4,5,6\}$, what is $A \cup B$? Please reason step by step, and put your final answer within \texttt{\textbackslash boxed\{\}}. \\
\textbf{Answer} \\
$\{1,2,3,4,5,6\}$ \\
\hline
\end{tabular}
\end{table*}

\begin{table*}[t]
\centering
\caption{Cluster 6 examples. Only 2 examples are shown for illustration.}
\begin{tabular}{|p{0.95\textwidth}|}
\hline
\textbf{Example 1} \\
\textbf{Prompt} \\
What is the value of the determinant of the matrix $\begin{bmatrix}1 & 2 \\ 3 & 4\end{bmatrix}$? Please reason step by step, and put your final answer within \texttt{\textbackslash boxed\{\}}. \\
\textbf{Answer} \\
$-2$ \\
\hline
\textbf{Example 2} \\
\textbf{Prompt} \\
Simplify: $(2x - 3)^2 - (x + 1)^2$. Please reason step by step, and put your final answer within \texttt{\textbackslash boxed\{\}}. \\
\textbf{Answer} \\
$3x^2 - 14x + 8$ \\
\hline
\end{tabular}
\label{tab:last_ex}

\end{table*}

\end{document}